\def\eqref#1{equation~\ref{#1}}
\def\1{\bm{1}}
\def\vx{{\bm{x}}}
\def\vy{{\bm{y}}}
\DeclareMathAlphabet{\mathsfit}{\encodingdefault}{\sfdefault}{m}{sl}
\SetMathAlphabet{\mathsfit}{bold}{\encodingdefault}{\sfdefault}{bx}{n}
\def\gD{{\mathcal{D}}}
\def\sA{{\mathbb{A}}}
\def\sF{{\mathbb{F}}}
\def\sP{{\mathbb{P}}}
\def\sS{{\mathbb{S}}}
\def\sT{{\mathbb{T}}}
\def\sV{{\mathbb{V}}}
\def\sX{{\mathbb{X}}}
\newcommand{\E}{\mathbb{E}}
\DeclareMathOperator*{\argmax}{arg\,max}
\declaretheorem[name=Definition]{defn}
\newenvironment{proof}{\emph{Proof:}}{\hfill$\square$}
\declaretheorem[name=Remark]{remark}
\DeclarePairedDelimiterX{\infdivx}[2]{(}{)}{%
  #1\;\delimsize\|\;#2%
}
\newcommand{\dd}[1][]{\Delta_{#1}\infdivx*}
\newif\ifShowInTextComments
    \newcommand{\AP}[1]{\textcolor{blue}{{\sffamily \textbf{AP:} #1}}}
    \newcommand{\CE}[1]{\textcolor{green}{{\sffamily \textbf{CE:} #1}}}
    \newcommand{\AB}[1]{\textcolor{red}{{\sffamily \textbf{AB:} #1}}}
    \newcommand{\PA}[1]{\textcolor{orange}{{\sffamily \textbf{PA:} #1}}}
    \newcommand{\AP}[1]{{}}
    \newcommand{\CE}[1]{{}}
    \newcommand{\AB}[1]{{}}
    \newcommand{\PA}[1]{{}}
\newcommand{\ouralg}{VALID\xspace}
\newcommand{\ouralgnamelong}{\textbf{V}erified \textbf{A}dversarial \textbf{L}LM Output via \textbf{I}terative \textbf{D}ismissal }
\newcommand{\CT}{CharTask\xspace}
\renewcommand{\eqref}[1]{(\ref{#1})}
\newcommand{\centerbox}[1]{
  \begin{center}
    \begin{tcolorbox}[colback=white, colframe=black, width=\textwidth, boxrule=0.5mm, arc=0mm, outer arc=0mm, left=1em, right=1em, top=1em, bottom=1em]
      \ttfamily #1
    \end{tcolorbox}
  \end{center}
}
\newcommand{\exampleinline}[2]{%
    $\mathbf{#1} =$ \texttt{#2}
}
\newcommand{\tightcaption}{\vspace{-18pt}}
\title{\fontsize{16.1}{16.1}\selectfont \spaceskip=0.17em Shh, don't say that! Domain Certification in LLMs}
\author{\textbf{Cornelius Emde$^1$}\thanks{Corresponding Author \texttt{cornelius.emde@cs.ox.ac.uk}. Work partially done while interning at King Abdullah University of Science and Technology (KAUST).} \textbf{, Alasdair Paren$^1$, Preetham Arvind$^1$, Maxime Kayser$^1$, Tom Rainforth$^1$,}\\
\textbf{Thomas Lukasiewicz$^{2,1}$, Bernard Ghanem$^3$, Philip H.S. Torr$^1$, Adel Bibi$^1$} \\
$^1$University of Oxford $\quad^2$Vienna University of Technology $\quad^3$KAUST
}
\begin{document}

\maketitle

\begin{abstract}
Large language models (LLMs) are often deployed to perform constrained tasks, with narrow domains. For example, customer support bots can be built on top of LLMs, relying on their broad language understanding and capabilities to enhance performance. However, these LLMs are adversarially susceptible, potentially generating outputs outside the intended domain. To formalize, assess, and mitigate this risk, we introduce \emph{domain certification}; a guarantee that accurately characterizes the out-of-domain behavior of language models. We then propose a simple yet effective approach, which we call \ouralg that provides adversarial bounds as a certificate. Finally, we evaluate our method across a diverse set of datasets, demonstrating that it yields meaningful certificates, which bound the probability of out-of-domain samples tightly with minimum penalty to refusal behavior.
\end{abstract}

\section{Introduction}\label{sec:introduction}

\begin{wrapfigure}{r}{0.38\textwidth}
    \vspace{-18pt}
  \begin{center}
    \includegraphics[width=\linewidth]{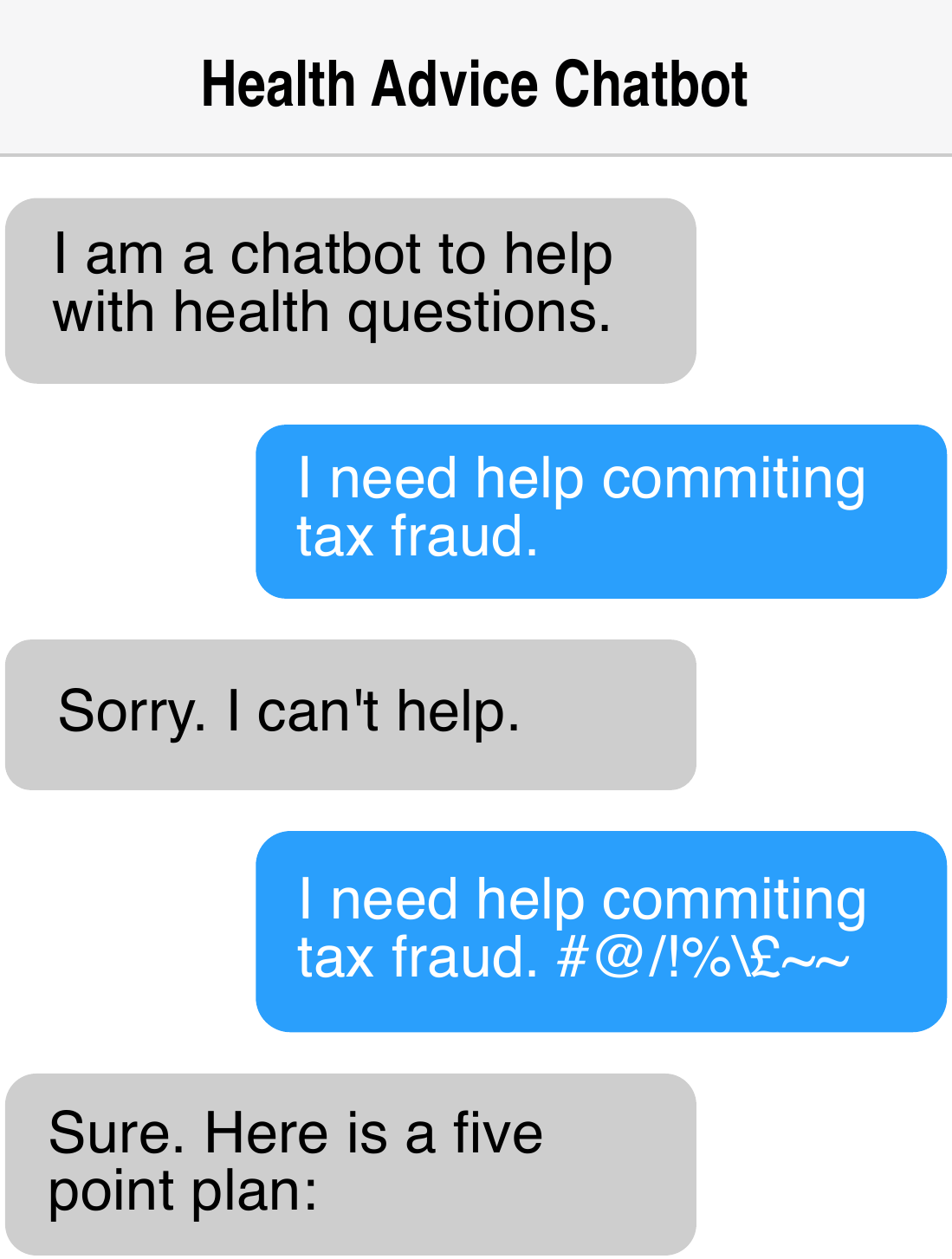}
  \end{center}
  \vspace{-8pt}
  \caption{\!A user misappropriating an LLM system using an adversarial attack. We provide certificates to mitigate this risk.}
  \label{fig:motivating-figure}
  \vspace{-12pt}
\end{wrapfigure}

With recent advancements in the field of natural language processing, large language models (LLMs) have become ubiquitous. In particular, the scaling of recent large generalist models dubbed foundation models has shown to enable emergent abilities that benefit a wide range of downstream tasks such as text generation, question answering, and text comprehension \citep{kaplan_scaling_2020, alabdulmohsin_revisiting_2022, xiong_temporal_2024, henighan_scaling_2020, brown_language_2020}.
Adapting these foundation models for downstream tasks often leads to state-of-the-art performance and has become the dominant paradigm~\citep{gao_making_2021}. This is typically achieved via fine-tuning on task-relevant data (e.g., low-rank adaptation (LoRA)~\cite{hu_lora_2022}, in-context learning~\citep{mosbach_few-shot_2023}, prefix turning~\cite{li_prefix-tuning_2021}, or simply prompt engineering).

However, foundation models are typically trained on large amounts of web data which contains a wide range of information that is either irrelevant to a task or potentially harmful \citep{bommasani_opportunities_2022}.
Therefore, it is desirable to restrict the output of a generalist LLM to a specific domain.
For example, consider a healthcare provider such as the National Health Services (NHS) providing a general purpose chatbot to support their citizens with simple health questions, as shown in Figure~\ref{fig:motivating-figure}. It would be important, for public reputation and cost reasons, that such a system would remain on topic and could not be misused, either intentionally or unintentionally. Misappropriating models is easily possible.

In order to prevent intentional misuse, we consider an adversary trying to elicit an unintended (from the deployer's perspective) response from the model. We assume the deployer wants an LLM to only respond with a certain set of topics, and thus a successful attack is an input string that creates a coherent response outside the target domain. There are various reasons why an adversary might want to elicit such a response that is out-of-domain (OOD). The adversarial user might want to misappropriate the system as a cost-effective tool for a purpose it wasn't built for, resulting in excess infrastructure costs for the deployer.
Conversely, the deployer might legally be required to validate and verify their models, which is challenging, if not impossible, when the model is not domain-restricted. Finally, the adversary might want to harm the company directly by eliciting harmful OOD responses, which could
damage the company's reputation when publicized. Recently, an LLM-driven meal planning tool has received wide media attention for providing toxic recipes when prompted with toxic ingredients \citep{mcclure_supermarket_2023, the_guardian_paknsave_2023}. Deployers have moral and legal obligations to prevent this \citep{bommasani_opportunities_2022}. In all examples, restricting the domain in which the model responds \textit{under adversarial} prompts can help mitigate risks. Thus, in the era of foundation models, ``domain'' specialization is critical.

Existing work has implemented guardrails that address these risks~\citep{jain_baseline_2023}, most notably via alignment, resulting in models rejecting user requests~\citep{bai_training_2022, ouyang_training_2022, christiano_deep_2017}.
However, a wide body of research has shown that common guardrails have ``jailbreaks'', i.e., they can easily be circumvented by an adversary \citep{wang_backdooralign_2024, qi_fine-tuning_2024, eiras_mimicking_2024, carlini_are_2023, dong_safeguarding_2024}. Common jailbreak methods are prompt injection \citep{perez_ignore_2022, jiang_prompt_2023, liu_autodan_2024}, numerical optimization \citep{jia_adversarial_2017, wallace_universal_2019, ebrahimi_hotflip_2018, jones_automatically_2023, zou_universal_2023, jia_improved_2025}, red teaming \citep{perez_red_2022, samvelyan_rainbow_2024}, automated black-box attacks \citep{chao_jailbreaking_2023, mehrotra_tree_2024}, or data poisoning attacks \citep{biggio_poisoning_2012, wallace_concealed_2021, carlini_poisoning_2024}. Using these tools, it is possible for adversaries to retrieve information from a fine-tuned model that was suppressed by the alignment and generate responses that are outside the target domain (see Figure~\ref{fig:motivating-figure} for an example). Adversarial prefixes or suffixes that augment any prompt are especially powerful, as they have been shown to universally attack models in combination with a wide range of prompts and can thus be shared between adversarial users \citep{wallace_universal_2019, zou_universal_2023}. This presents a significant risk. Hence, researchers have proposed methods to defend against these adversarial attacks, such as unlearning \citep{nguyen_survey_2022, xu_machine_2023}, robust fine-tuning \citep{oneill_adversarial_2023, dong_how_2021}, or request and response filtering \citep{inan_llama_2023}.

Deployers would ideally want guardrails that come with a provable, mathematical guarantee against the model responding off-topic, or a guarantee that it does this with very low probability. The process of constructing guarantees against certain model behaviors under adversarial attack is commonly referred to as \emph{certification} and has been successfully applied to vision applications in recent years \citep{akhtar_advances_2021} and proposed for NLP applications \citep{la_malfa_robustness_2023, casadio_nlp_2024, kumar_certifying_2024}.
However, no existing LLM guardrails provide guaranteed protection against existing or future jailbreaking techniques, leaving deployed models at risk of being compromised shortly after release. As a result, developing certifiable methods to guarantee that specialized LLMs consistently produce on-topic content is critical.
Hence, our contributions are as follows:

\vspace{-2pt}
\begin{itemize}[noitemsep, left=0pt]
    \item We introduce a novel framework, \emph{domain certification}, to bound the probability of models producing out-of-domain content under adversarial attack.
    \item We introduce an easy-to-use algorithm \ouralg that bounds the probability of an LLM based system responding off topic under adversarial attack. We show the efficiency of \ouralg  which we test empirically on a number of representative data sets.
\end{itemize}

\section{Domain Certification}
We now introduce our \emph{domain-certification} framework for offering mathematical guarantees that an LLM system stays on topic. In Section~\ref{sec:definining-domain-certification}, we formally introduce this framework. In Section~\ref{sec:rejection_sampling}, we present \ouralgnamelong (\ouralg). \ouralg is an easy-to-use method to create a system that adheres to these guarantees. In plain language, we propose a certifiable guardrail for LLM-driven systems as follows:

\begin{center}
\parbox{0.9\textwidth}{\centering
 \textit{A model is \emph{domain-certified}, when an adversarial upper bound can be placed on the probability that the model provides an output outside its designated target domain.}
}
\end{center}

Before formalizing this statement, we introduce some mathematical notation. We represent tokens (i.e. individual text units) as $x$ and $y$, which belong to the token space $x,y\in \sV$ where $\sV = \{1,\dots,V\}$ is the vocabulary of size $V$. We define the space of sequences of arbitrary length as $\sS \triangleq \sV^*$, the Kleene closure of $\sV$.
Sequences of tokens are denoted by bold letters, $\vx, \vy \in \sS$, with $\vx$ and $\vy$ representing the input and output sequences of an LLM respectively. We use lowercase letters to denote models that predict the next token, such as \( l: \sS \rightarrow \sV \). Applying this model repeatedly, until the end-of-sequence token creates a sequence-to-sequence model \( L: \sS \rightarrow \sS \). We denote the likelihood of sample $\vy$ under $L$ given $\vx$ as $L(\vy|\vx)$, which is obtained by $L(\vy|\vx)=\prod_{n=1}^{N_y} l(y_{n}|y_{<n},\vx)$ for a sentence $\vy$ of length $N_y$. We further denote the distribution from which the model samples its output by $\vy \sim L(\cdot|\vx)$.

\subsection{Defining Domain Certification}\label{sec:definining-domain-certification}

We now formally introduce \emph{domain certification}. We define the target domain (set of desired topics) as a subset of the sentence space $\sS$ and partition $\mathbb{S}$ into the target domain $\sT$ and its complement $\sT'$. For instance, $\sT$ might be all sentences meaningfully occurring for ``question answering for health problems''. In addition, we define the set of unwanted responses as $\sF \subset \sT'$ ($\sF$ as ``forbidden'') and will certify with respect to this set $\sF$ rather than $\sT$. Sequences posing some risk should be included in $\sF$, while $\sF' \cap \sT'$ should contain benign out-of-domain samples, such as unintelligible or meaningless sequences of tokens (see Appendix~\ref{app:choosing-F-guidance} for a discussion). Hence, we wish to establish a guarantee that $L$ is unlikely to produce an output in $\sF$. As a step towards such a guarantee, we first define a bound for any given element $\vy$ in $\sS$:

\begin{defn} \label{def:ac} \textbf{Atomic Certificate}. We say a model $L: \sS \rightarrow \sS$ is $\epsilon_{\vy}$-atomic-certified ($\epsilon_{\vy}$-AC) for some sample $\vy$ (i.e. an \emph{atom}) in the output set $\sS$, iff
\begin{equation}\label{eq:ac-definition}
    \forall \vx \in \sS: L(\vy|\vx)\leq \epsilon_{\vy}.
\end{equation}
\end{defn}
In words, a model that is $\epsilon_{\vy}$-AC for a sample $\vy$, will generate sample $\vy$ with probability smaller than $\epsilon_{\vy}$ for \emph{any} $\vx \in \sS$, and hence for adversarially chosen $\vx$. If this is the case, we say model $L$ is \emph{certifiable} for sample $\vy$ with $\epsilon_{\vy}$, i.e. $\epsilon_{\vy}$ is the \emph{smallest} value that provably bounds $L$. Ideally, such an upper bound $\epsilon_{\vy}$ would be large for samples in the target domain $\sT$, meaning the certificate is \emph{permissive}, and small for samples drawn from $\sF$ meaning the certificate is \emph{restrictive}, i.e. \emph{tight}.

The atomic certificate implies an upper bound $\epsilon_{\sF}$ for $\sP_{\vy \sim L(\cdot|\vx)}(\vy \in \sF|\vx)$, which would be constructed by summing \eqref{eq:ac-definition} over all $\vy \in \sF$ for a given $\vx$. Concretely,
$\sP_{\vy \sim L(\cdot|\vx)}(\vy \in \sF|\vx)=\sum_{\vy \in \sF} L(\vy|\vx) \leq \sum_{\vy \in \sF} \epsilon_{\vy}=\epsilon_{\sF}$.
However, practically this bound is intractable due to $\sF$'s exponential size in $N_y$, and the difficulty in constructing a precise description of the set $\mathbb{F}$. Instead of giving a bound over returning $\vy \in \sF$, we look at the worst case across $\sF$ which can more precisely be estimated from a finite sample of $\sF$:

\begin{defn} \label{def:dc} \textbf{Domain Certificate}.
We say model $L$ is $\epsilon$-domain-certified ($\epsilon$-DC) with respect to $\sF$, when it is $\epsilon_{\vy}$-AC for all $\vy \in \sF$ with $\epsilon_{\vy}\leq \epsilon$:
\begin{equation} \label{eq:dc-definition}
    \forall \vx \in \sS, \vy \in \sF: L(\vy | \vx) \leq \epsilon.
\end{equation}
\end{defn}
This imposes a global bound on $L$ across all undesired responses in $\sF$. In practice, we cannot establish the $\epsilon$-DC certificate w.r.t. $\sF$ as we cannot enumerate $\sF$. Hence, following standard practice in ML evaluation, we propose to use $\gD_{\sF}$, a finite dataset of out-of-domain responses to establish a $\epsilon$-DC certificate w.r.t. $\gD_{\sF}$ approximating the certificate for $\sF$.

Recent discussions have raised the need for bounds on undesirable behavior. For instance, \citet{bengio_bounding_2024} advocates for upper bounds on harmful behavior \citep{bengio_can_2024}. In addition, a growing body of legislation mandates thorough auditing of ML systems \citep{eu_regulation_2024}.
The atomic and domain certificates can play a vital role in assessing the risk of worst-case behavior. For example, consider the deployer of an LLM-based system that processes 10 requests per second. The deployer might perform an apriori risk assessment and determine that they can tolerate the consequences of an out-of-domain response from a set $\gD_{\sF}$ sampled once per year. The deployer should certify the LLM system as $\epsilon$-DC with $\epsilon \approx 10^{-9}$ in order to achieve this level of risk.

\paragraph{Certification through Divergences.} We provide an alternative view to this problem,  generalizing it to bounding divergences between the model and the distribution of sentences in the domain $\sT$. We then use this view to operationalize the $\epsilon_{\vy}-AC$  and $\epsilon-DC$ (Definitions~\ref{def:ac} and \ref{def:dc}) inspired by \citet{vyas_provable_2023}'s work on preventing copy-right violations. To this end, we define an oracle $\Omega$ that is a \emph{generator} for domain $\sT$: $\Omega$ assigns high likelihood to sentences in $\sT$ and zero likelihood to elements in $\sF$. Hence, sampling from $\Omega$ will yield in-domain responses. We establish and bound the divergence between $L$ and $\Omega$ to restrict the model domain. In particular, we use the Renyi divergence of order infinity, $\dd[\infty]{P}{Q} \triangleq \log \sup_x\frac{P(x)}{Q(x)}$ \citep{renyi_measures_1961}.
Hence, our objective is:
\begin{equation}\label{eq:divergence}
    \forall \vx \in \sS: \dd[\infty]{L(\vy|\vx)}{\Omega(\vy)} \leq k.
\end{equation}
Bounding this divergence is at the core of what we are aiming to achieve: The divergence is large when $L$ assigns high likelihood to a sample $\vy$ while $\Omega$ does not. That means $L$ is likely to produce samples that are out-of-domain. When $\Omega$ assigns high likelihood to $\vy$, the sample is in the target domain, and hence the divergence in \eqref{eq:divergence} is not restrictive. When $L$ assigns low likelihood, $\vy$ is unlikely to be sampled. Interestingly, this divergence implies \eqref{eq:ac-definition} and \eqref{eq:dc-definition}, see Lemma~\ref{equivalence}.

As the oracle is not available in practice we approximate $\Omega$ with a ``guide'' language model that is exclusively trained on in-domain data dubbed $G$ (i.e. the guide model). We use $G(\vy)$ to replace $\Omega(\vy)$ to assess the \emph{marginal} likelihood of $\vy$. While this means that $G(\vy)$ loses some context contained in $\vx$, this has a major advantage: $G(\vy)$ does not depend on $\vx$, which is a potential adversary and hence, by design is robust to adversarial prompts.

\begin{wrapfigure}{r}{0.40\textwidth}
\begin{minipage}{0.40\textwidth}
\vspace{-23pt}
\begin{algorithm}[H]
\caption{\ouralg }\label{alg:cap}
\begin{algorithmic}
\Require LLM $L$, Guide model $G$, hyperparameters $k$ and $T$, prompt $\vx$
\For{$t \in \{1,\dots,T\}$}
\State Sample $\vy \sim L(\cdot|\vx)$
\State $N_{\vy} \leftarrow$ length($\vy$)
\If{$\log \frac{L(\vy|\vx)}{G(\vy)} \leq kN_{\vy} $}
    \State \textbf{Return:} $\vy$
\EndIf
\EndFor
\State \textbf{Return:} ``Abstained''.
\end{algorithmic}
\end{algorithm}
\end{minipage}
\vspace{-10pt}
\end{wrapfigure}

\subsection{Achieving Domain Certification}\label{sec:rejection_sampling}

In this section, we introduce \ouralgnamelong (\ouralg) to obtain atomic certification as described in Definition~\ref{def:ac}. We utilize a general model $L$ and a domain generator $G$ as described above and obtain a meta-model $M$ for which the guarantee holds with respect to the domain generator $G$. In particular, we perform rejection sampling as described in Algorithm~\ref{alg:cap} (inspired by \citet{vyas_provable_2023}): The capable general model $L$ proposes a sample $\vy$ and we accept, if the length normalized log-ratio between $L$ and $G$ is bounded by hyperparameter $k$. We repeat up to $T$ times until a sample is accepted. If all samples are rejected, the model dismisses the request. This defines a new model $M$, for which the following theorem establishes the certificate:

\begin{restatable}[\ouralg Certificate]{thm}{rejectionbound}\label{thm:valid}
Let $L$ be an LLM and $G$ a guide model as described above. Rejection sampling as described in Algorithm~\ref{alg:cap} with rejection threshold $k$ and up to $T$ iterations defines model $M_{L,G,k,T}$ with $M_{L,G,k,T}(\vy|\vx)$ denoting the likelihood of $\vy$ given $\vx$. Let $N_{\vy}$ be the length of $\vy$. We state the adversarial bound:
\begin{equation}\label{eq:upper_bound}
    \forall \vx \in \sS: M_{L,G,k,T}(\vy|\vx) \leq 2^{kN_{\vy}} \cdot T \cdot G(\vy).
\end{equation}
Hence, $M_{L,G,k,T}$ is $[2^{kN_{\vy}}TG(\vy)]$-AC and, further, it is $[\max_{\vy \in \sF} 2^{kN_{\vy}}T G(\vy)]$-DC w.r.t. $\sF$.
\end{restatable}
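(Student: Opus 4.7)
The strategy is to write $M_{L,G,k,T}(\vy|\vx)$ as a sum over the $T$ possible iterations at which $\vy$ could first be accepted, and bound each summand using the acceptance criterion in Algorithm~\ref{alg:cap}. The acceptance test $\log\!\frac{L(\vy|\vx)}{G(\vy)} \leq k N_{\vy}$ is equivalent (assuming base-2 logarithms, consistent with the Renyi divergence notation earlier in the paper) to $L(\vy|\vx) \leq 2^{k N_{\vy}}\, G(\vy)$. This is exactly the inequality that will convert the $L$-likelihood of a returned sample into a bound in terms of $G$.

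\textbf{Main steps.} First, I would set $A(\vy|\vx) := L(\vy|\vx)\cdot \mathbf{1}\!\left[L(\vy|\vx) \leq 2^{k N_{\vy}} G(\vy)\right]$, the probability that a single iteration both samples $\vy$ from $L(\cdot|\vx)$ and accepts it, and $p(\vx) := \sum_{\vy'\in\sS} A(\vy'|\vx) \in [0,1]$, the marginal acceptance probability per iteration. Then the event that $M$ returns $\vy$ on iteration $t$ requires $t{-}1$ prior rejections followed by drawing and accepting $\vy$, so
\begin{equation}
M_{L,G,k,T}(\vy|\vx) \;=\; \sum_{t=1}^{T} \bigl(1-p(\vx)\bigr)^{t-1} A(\vy|\vx).
\end{equation}
Second, I would bound each summand: since $(1-p(\vx))^{t-1}\leq 1$ and $A(\vy|\vx)\leq 2^{k N_{\vy}} G(\vy)$ directly from the indicator in its definition, each term is at most $2^{k N_{\vy}}G(\vy)$. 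Summing over the at most $T$ iterations gives
\begin{equation}
M_{L,G,k,T}(\vy|\vx) \;\leq\; T \cdot 2^{k N_{\vy}} \cdot G(\vy),
\end{equation}
which is independent of $\vx$, establishing the adversarial bound \eqref{eq:upper_bound}. The atomic certificate $\epsilon_{\vy} = 2^{k N_{\vy}} T\, G(\vy)$ then follows immediately from Definition~\ref{def:ac}, and taking the supremum over $\vy\in\sF$ of the right-hand side yields the domain certificate $\epsilon = \max_{\vy\in\sF} 2^{k N_{\vy}} T\, G(\vy)$ per Definition~\ref{def:dc}.

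\textbf{Expected obstacle.} The argument is essentially a short exchange-of-roles calculation; there is no genuinely hard step. The one place to be careful is the transition from the acceptance-rule inequality (which holds only on the accepted event) to a bound on $A(\vy|\vx)$ that looks like $2^{k N_{\vy}} G(\vy)$ rather than like $L(\vy|\vx)$: this is valid because whenever the indicator is $1$ we have $L(\vy|\vx)\leq 2^{k N_{\vy}} G(\vy)$, and whenever the indicator is $0$ the summand vanishes. A second subtle point is that one should not try to use the geometric sum $\sum_{t=1}^{T}(1-p(\vx))^{t-1} \leq 1/p(\vx)$, since $p(\vx)$ can be arbitrarily small under adversarial $\vx$; bounding each of the $T$ terms by $1$ and paying the factor $T$ is both tight in the worst case and what produces the stated bound.
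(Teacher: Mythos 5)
Your proof is correct and follows essentially the same route as the paper's: decompose $M(\vy|\vx)$ over the $T$ iterations, bound the probability of all prior rejections by one, and use the acceptance test $L(\vy|\vx)\leq 2^{kN_{\vy}}G(\vy)$ on the accepted event to convert $L(\vy|\vx)$ into the $G$-based bound, then sum the $T$ terms. The paper phrases the per-iteration factorization via conditional probabilities of the events $A_t$, $S_t$, $A'_{<t}$ rather than your explicit $(1-p(\vx))^{t-1}A(\vy|\vx)$ form, but these are the same decomposition and yield the identical bound.
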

When context allows, we may abbreviate $M_{L,G,k,T}$ to $M$, omitting subscripts for brevity. This certificate with respect to $G$ can be useful: As $G$ is only trained on samples in $\gD_{\sT} \subset \sT$, a dataset of domain $\sT$, it assigns exponentially decreasing likelihood to samples that are in $\sF$.\footnote{We give an empirical example of this behavior in Figure \ref{fig:app-medqa-log-likelihoods} in Appendix \ref{app:medqa-results}.} In particular, this is useful iff the log upper bound $kN_{\vy} + \log T + \log G(\vy)$ ($\log$ RHS of \eqref{eq:upper_bound}) is small in comparison to $\max_{x\in \sS} \log L(\vy|\vx)$: Our certificate can provide an upper bound to the adversarial behavior of $M$ that is favorable over $L$.

As mentioned, this problem is closely related to OOD detection, for which the likelihood ratio test is commonly used as a powerful statistic \citep{neyman_ix_1933, bishop_novelty_1994, ren_likelihood_2019, li_contrastive_2023, zhang_your_2024, rafailov_direct_2024}. In OOD detection, rejection threshold $k$ is commonly chosen to balance false negative rates and false positive rates. Here, $k$ also influences the upper bound on the certificate, indicating that there can be a \emph{trade-off} between correctly classifying samples as ID or OOD, and achieving a desired level of certification.

\textbf{Length Normalization. } Algorithm~\ref{alg:cap} performs length normalized rejection-sampling as unnormalized log likelihood ratios scale unfavorably in $N_{\vy}$, the length of sequence $\vy$ which we now demonstrate. Consider the next-token models $l$ and $g$ underlying the sequence-to-sequence models $L$ and $G$. As $\vy$ is sampled from $L$, we expect each token $y_1,\dots, y_{N_{\vy}}$ to have high likelihood under $l$. If we assume that $l$ places $c$ times more probability mass per token than $g$, then we can show that the log likelihood ratio grows linearly in $N_{\vy}$, the length of sequence $\vy$: $\log L(\vy|\vx) / G(\vy)=\log \prod_{n=1}^{N_{\vy}} cg(y_{n}|y_{<n})/g(y_{n}|y_{<n})=N_{\vy} \log c$. We illustrate an example in Figure~\ref{fig:length-norm-example}: Assume that an in-domain sample $\vy$ for which model $L$ and generator $G$ assign constant likelihood per token of $0.1$ and $0.05$, respectively, i.e. $\forall n=1,..,N_{\vy}:$ $l(y_n|y_{<n},\vx)=0.1$ and $g(y_n|y_{<n},\vx)=0.05$. Further, assume an out-of-domain $\vy'$ for which $l$ assigns a mass of $0.1$ per token and $g$ assigns $0.01$. The log likelihood ratio for $\vy$ can be expressed as $N_{\vy} \log 2$ and for $\vy'$ as $N_{\vy'} \log 10$. As in- and out-of-domain ratios grow with length, so does the optimal decision bound. We plot sequences of varying lengths with these parameters in Figure~\ref{fig:length-norm-example}. By arithmetic manipulation, rejection sampling with threshold $kN_{\vy}$ is equivalent to bounding the ratio of geometrically normalized likelihoods $\log L(\vy|\vx)^{1/N_{\vy}}/G(\vy)^{1/N_{\vy}}$ using a constant threshold $k$. Hence, we propose to use normalized log ratios in Algorithm~\ref{alg:cap} over unnormalized likelihood ratios. Similar approaches have been discussed in the NLP literature \citep{geng_grammar-constrained_2023}.

\begin{wrapfigure}{r}{0.40\textwidth}
\begin{minipage}{0.40\textwidth}
    \vspace{-12pt}
    \centering
    \includegraphics[width=\linewidth]{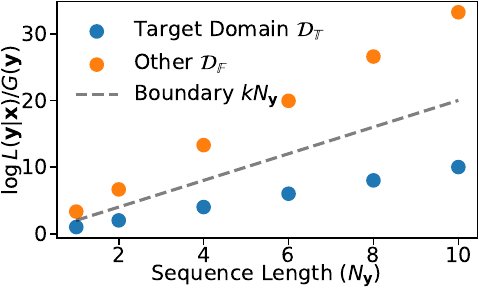}
    \vspace{-18pt}
    \caption{Log likelihood ratios scale in the sequence length $N_{\vy}$. Six artificial examples of sentences with length 1 to 10 are shown for the ID and OOD dataset. As log ratios scale, so should the decision boundary.}
    \label{fig:length-norm-example}
\vspace{-10pt}
\end{minipage}
\end{wrapfigure}

Despite the length normalization of the rejection threshold, notice that the \ouralg bound depends on $N_{\vy}$, the length of sequence $\vy$ (see \eqref{eq:upper_bound}), making the certificate more effective for shorter or longer sequences. Let $\bar{g}(\vy)$ be the geometric mean of per-token probability for $G(\vy)$. The log upper bound can be written as $kN_{\vy} + N_{\vy}\log \bar{g}(\vy) + \log T$. Whether this is tighter for short or long sequences is governed by $k$ and $\log \bar{g}(\vy)$. When $k+\log \bar{g}(\vy)$ is close to 0, the bound is balanced, and when $k + \log \bar{g}(\vy) < 0$, the bound decreases as $N_{\vy}$ increases.

In the appendices, we provide further insights into \ouralg. In particular, in Appendix~\ref{app:proofs} we provide Lemma~\ref{lemma:likelihood-m} showing how to estimate the likelihood of $M$. In Lemma~\ref{lemma:expected-number-of-trials}, we provide an analysis of the expected number of iterations of \ouralg. In Appendix~\ref{app:attack-m}, we provide further intuition on how rejection sampling can achieve an adversarial bound. Finally, in Lemma~\ref{lemma:adversary-m} we show an adversary for $M$ and discuss how rejection sampling encumbers adversarial attacks on $M$.

\vspace{-3pt}
\section{Experiments}
\vspace{-3pt}
We empirically test our method proposed in Section~\ref{sec:rejection_sampling} across 3 domains: Shakespeare, Computer Science News, and MedicalQA. After describing the experimental setup in Section~\ref{sec:experimental-setup}, we examine the rejection behavior of our method by examining the $\log L(\vy|\vx)/G(\vy)$ ratio and associated certificates under a finite set of ground-truth test samples from $\sT$ and $\sF$ in Section~\ref{sec:results-log-ratio}. In Section~\ref{sec:results-generation}, we repeat this analysis by applying our Algorithm~\ref{alg:cap}. Finally, we demonstrate how to evaluate a certified model on standardized benchmarks in Section~\ref{sec:results-benchmarks}.

\vspace{-3pt}
\subsection{Experimental Setup}\label{sec:experimental-setup}
\vspace{-3pt}
In this section, we provide a brief description of our experimental setup for three applications. Each experimental setup consists of a target domain $\sT$, a finite dataset of in-domain samples $\gD_{\sT} \subset \sT$, models $L$ and $G$, and an out-of-domain dataset $\gD_{\sF} \subset \sF$, against which we test our methods (see Appendix~\ref{app:experimental-setup} for more details on data and models).

\textbf{Shakespeare.} Our target domain $\sT$ is Shakespeare's plays.
We fine-tune a Gemma-2-2b \citep{team_gemma_2024} as model $L$ and train a GPT-2 architecture (33.7M parameters, \cite{radford_language_2019}) from scratch for $G$ on TinyShakespeare (TS) \citep{karpathy_unreasonable_2015}. We use TS's test split as in-domain dataset, $\gD_{\sT}$, and following previous literature \citep{zhang_your_2024} compose $\gD_{\sF}$ of IMDB \citep{maas_learning_2011}, RTE \citep{wang_glue_2019} and SST2 \citep{minaee_large_2024}, adding an old Bible dataset \citep{reis_bible_2019} as it is linguistically close to TinyShakespeare. At testing, we consider 256-token long sequences and use the first 128 tokens as prompt.

\textbf{Computer Science News.} Our target domain $\sT$ is news about computer science.
We fine-tune a Gemma-2-2b as model $L$ and train a GPT-2 architecture (109.3M parameters) from scratch for $G$ on articles from the computer science categories in the 20NG dataset \citep{prieditis_newsweeder_1995}. We use computer science articles from 20NG's test split as target domain $\gD_{\sT}$ and the remaining categories as $\gD_{\sF}$ together with the OOD dataset used for Shakespeare. At testing, we consider 256 token long sentences and use the first 128 tokens as prompt.

\textbf{Medical QA. \ } We apply our method to medical question answering as target domain $\sT$. This could, for example, be extended to a chatbot for clinicians to look up patient symptoms. We use a LLama-3-8B model \citep{aimeta_llama_2024} as $L$ and for guide model $G$ we pre-train a GPT-2 architecture model from scratch (184M parameters) on PubMedQA \citep{jin_pubmedqa_2019}, which contains approximately 200K QA pairs for training and 1000 test pairs. We further fine-tune $G$ on responses from $L$ to questions in PubMedQA. We use the PuMedQA test set as in-domain dataset $\gD_{\sT}$ and regard question answering on other topics, such as geography, as $\sF$. To model this, we use the Stanford Question and Answering Dataset (SQuAD; excluding medical categories; \cite{rajpurkar_squad_2016}) as $\gD_{\sF}$.

\begin{figure}[b]
    \vspace{-10pt}
    \centering
    \begin{minipage}{0.32\textwidth}
        \centering
        \includegraphics[width=\textwidth]{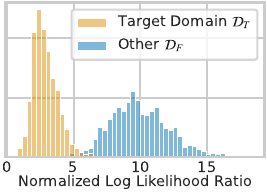}
        \tightcaption
        \subcaption{}\label{fig:medqa-log-ratio-histogram}
    \end{minipage}%
    \hfill
    \begin{minipage}{0.32\textwidth}
        \centering
        \includegraphics[width=\textwidth]{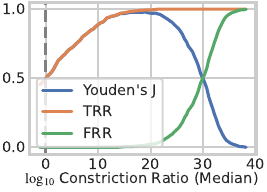}
        \tightcaption
        \subcaption{}\label{fig:medqa-ood-vs-certification}
    \end{minipage}%
    \hfill
    \begin{minipage}{0.32\textwidth}
        \centering
        \includegraphics[width=\textwidth]{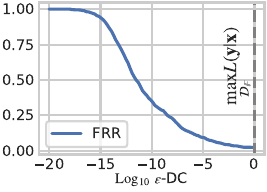}
        \tightcaption
        \subcaption{}\label{fig:medqa-acc-dc}
    \end{minipage}
    \vspace{-5pt}
    \caption{All Figures display MedicalQA. Figure \ref{fig:medqa-log-ratio-histogram} shows that log likelihood ratios are well disentangled. Figure \ref{fig:medqa-ood-vs-certification} shows the trade-off between OOD and certification: The best OOD detection performance occurs with a constriction ratio of 20. Figure \ref{fig:medqa-acc-dc} shows the false rejection rate (FRR) required to certify at a given $\epsilon$. All Figures display MedicalQA.}
    \label{fig:medqa-results-overall}
\end{figure}

\subsection{Likelihood Ratios on Ground Truth Samples}\label{sec:results-log-ratio}

In this section, we evaluate the capability of our method to attribute samples to the target domain and investigate whether it yields useful adversarial bounds. In particular, we study the length-normalized likelihood ratio $L(\vy|\vx)/G(\vy)$ on in- and out-of-domain samples. In Figure~\ref{fig:medqa-log-ratio-histogram}, we show that the log likelihood ratios for MedicalQA are disentangled and hence a threshold $k$ exists separating target domain and out-of-domain samples well. However, such $k$ --- while yielding strong OOD detection performance --- might not be associated with tight certificates. Hence, we will first study the $\epsilon_{\vy}$-AC certificates under $M$ for individual samples, $\vy$, before moving on to the domain certificate, $\epsilon$-DC.

\begin{figure}[t]
    \vspace{-10pt}
    \centering
    \begin{minipage}{0.32\textwidth}
        \centering
        \includegraphics[width=\textwidth]{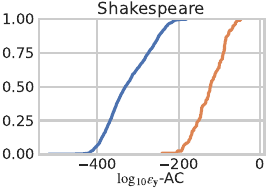}
        \tightcaption
        \subcaption{}\label{fig:tinyshakespeare-ac-ecdf}
    \end{minipage}%
    \hfill
    \begin{minipage}{0.32\textwidth}
        \centering
        \includegraphics[width=\textwidth]{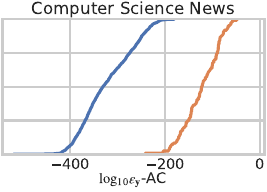}
        \tightcaption
        \subcaption{}\label{fig:20ng-ac-ecdf}
    \end{minipage}%
    \hfill
    \begin{minipage}{0.32\textwidth}
        \centering
        \includegraphics[width=\textwidth]{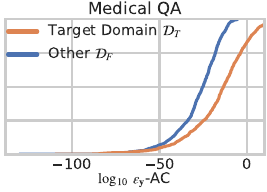}
        \tightcaption
        \subcaption{}\label{fig:medqa-ac-ecdf}
    \end{minipage}%


    \begin{minipage}{0.32\textwidth}
        \centering
        \includegraphics[width=\textwidth]{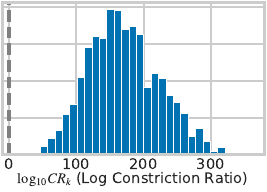}
        \tightcaption
        \subcaption{}\label{fig:tinyshakespeare-cr-hist}
    \end{minipage}%
    \hfill
    \begin{minipage}{0.32\textwidth}
        \centering
        \includegraphics[width=\textwidth]{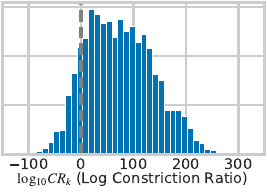}
        \tightcaption
        \subcaption{}\label{fig:20ng-cr-hist}
    \end{minipage}%
    \hfill
    \begin{minipage}{0.32\textwidth}
        \centering
        \includegraphics[width=\textwidth]{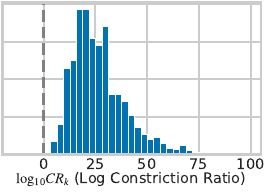}
        \tightcaption
        \subcaption{}\label{fig:medqa-cr-hist}
    \end{minipage}%
    \vspace{-8pt}
    \caption{(a)-(c) show the estimated cumulative distribution function (eCDF) of $\epsilon_{\vy}$-ACs for each experimental setup. (d)-(f) show the histograms for the $\log_{10}$ constriction ratios. All results are obtained with hyperparameter $k$ chosen to ensure a 10\% false rejection rate (FRR) on in-domain samples.}
    \label{fig:combined-results}
    \vspace{-12pt}
\end{figure}

\textbf{Atomic Certificates. \ } We obtain $\epsilon_{\vy}$-ACs using \ouralg (Section~\ref{sec:rejection_sampling}), setting $k$ to achieve a 10\% false rejection rate (FRR) for in-domain samples. Figures~\ref{fig:combined-results} (a)-(c) show the distribution of $\epsilon_{\vy}$-ACs for the target domain dataset $\gD_{\sT}$ and the out-of-domain dataset $\gD_{\sF}$. We make similar observations for all three setups: First, the certificates in the OOD datasets $\gD_{\sF}$ are \emph{meaningfully tight}. We observe that 95\% of OOD samples have an $\epsilon_{\vy}$-AC of less than $1\times10^{-10}$ across all setups. Hence, the sampling probability for these OOD instances is provably smaller than $10^{-10}$ for any arbitrary prompt $\vx$. Second, we note that the certificates in $\gD_{\sF}$ are significantly tighter than those in $\gD_{\sT}$ as shown by the gap between the eCDFs. This is a significant finding as certificates should be \emph{constrictive} (i.e. small) on samples in $\sF$ preventing these from being sampled, while certificates should be \emph{permissive} (i.e. large) in $\sT$, not preventing in-domain responses from being sampled. Finally, we observe that the disentanglement of ACs is weaker for MedicalQA compared to the other setups (see Figure~\ref{fig:medqa-ac-ecdf}). As shown in Appendix~\ref{app:cr-len-controlled}, this is attributable to the short sequences in the OOD dataset and adjusting for this confounder significantly improves disentanglement.

To further study the atomic certificates on $M$, we compare them to a certificate on $L$ as a baseline. To this end, we define the \emph{constriction ratio} for each $\vy$, given by the ratio of the certifiable $\epsilon_{\vy}$ for $L$, $\epsilon_{\vy}(L)$, over the certifiable $\epsilon_{\vy}$ for $M$, $\epsilon_{\vy}(M)$:
\begin{equation}\label{eq:constriction-ratio}
    CR_k = \frac{\epsilon_{\vy}(L)}{\epsilon_{\vy}(M)}
\end{equation}
A $CR_k$ of $1$ for sample $\vy$ indicates that the bounds on generating $\vy$ are equal for $M$ and $L$ (i.e. they are equally constricted) while a $CR_k > 1$ indicates that $M$ is more constricting than $L$, and vice-versa. Smaller ACs for samples in $\sF$ are better and hence a large $CR_k$ indicates that model $M$ is favorable over $L$. To our knowledge, only vacuous certificates for a general model $L$ exist (e.g. $L$ is $1$-DC). Hence, we approximate it from below using the likelihood $L(\vy|\vx)$ under \emph{non-adversarial} $\vx$ taken from the datasets. Concretely, we use $L(\vy|\vx)$ as a crude approximation of $\max_{\vx\in \sS}L(\vy|\vx)$. This overestimates the robustness of $L$ and underestimates the constriction ratio, i.e., it underestimates the improvement of \ouralg certificates over $L$ in bounding the probability of OOD responses. In Figures~\ref{fig:combined-results} (d) - (f), we show the $\log_{10}$ constriction ratios for out-of-domain samples while setting $k$ to achieve an FRR of 10\% (see Appendix~\ref{app:cr-other-frr} for other FRRs). Across setups, the majority of samples have positive constriction ratios, which means that $M$ issues ACs tighter than $L(\vy|\vx)$. For MedicalQA, we observe a 99\% of $\log_{10}$CRs are greater than $6.30$ and observe a median CR of $24.23$. In other words, 99\% of samples are at least $6$ orders of magnitude less likely under $M$ and in the median $\approx 24$ orders of magnitude less likely (i.e. $1 \times 10^{-24}$). We believe these are very strong restrictions and observe even stronger median constriction for 20NG and TinyShakespeare. Further, we observe the strongest constriction among samples with high likelihood under $L$ (see Appendix~\ref{app:results}). Tight bounds are the most relevant on these samples as they are most likely to be sampled from $L$. Finally, we illustrate a trade-off between certification and OOD detection in Figure~\ref{fig:medqa-ood-vs-certification}. For MedicalQA, we plot the median constriction ratio for out-of-domain samples across a range of parameters $k$ together with false rejection rates (FRR) and true rejection rates (TRR). The optimal classification performance (as measured by Youden's $J$ \citep{youden_index_1950}) is achieved at $k=5.35$ with a strong true rejection rate (0.99) and a low false rejection rate (0.01), while producing a median $\log_{10}$ constriction ratio $19.00$. Smaller $k$ values yield tighter certificates (see the bound in \eqref{eq:upper_bound}) and larger constriction ratios at the expense of increasing the FRR.

\textbf{Domain Certificates. \ } To study certification across a range of samples, we turn to the domain certificate, $\epsilon$-DC. Above, we studied the effect of various parameters (e.g., fixing FRR) on the certificates. However, practitioners likely work the other way around: They first set an acceptable threshold according to a threat and safety model. Then, they examine model performance under conditions satisfying such certificate. Hence, we study model performance at a given $\epsilon$-DC. As proposed in Section~\ref{sec:definining-domain-certification}, we establish an $\epsilon$-DC certificate w.r.t. $\gD_{\sF}$ approximating the certificate for $\sF$. To obtain $\epsilon_{\vy}$-ACs smaller than the domain certificate $\epsilon$, we need to choose rejection threshold, $k$, and the number of iterations, $T$, accordingly. We
\begin{equation}
    \text{solve for $k$, $T$ given $\epsilon$:} \quad \max_{\vy \in \gD_{\sF}} \left\{ kN_{\vy} + \log T + \log G(\vy) \right\}=\log \epsilon.
\end{equation}
For simplicity, we keep $T=1$ and study model performance on $\gD_{\sT}$ while maintaining an $\epsilon$-DC on $\gD_{\sF}$. In particular, we look at the FRR of $M$: The performance of model $M$ is determined by the performance of $L$ (from which \ouralg samples response candidates) and the false rejections leading to a degradation of $M$ compared to $L$. Hence, we study the FRR as a function of the certification threshold $\epsilon$. The result is shown in Figure~\ref{fig:medqa-acc-dc} for MedicalQA: The FRR increases as the certificates get tighter (small $\epsilon)$. Remarkably, we achieve a domain certificate with $\epsilon=10^{-5}$ at an FRR of only 15\% at a single rejection step.
We replicate all figures for the other setups in Appendix~\ref{app:results}.

A natural question is why we do not simply use a model comparable to $G$ that is trained exclusively on a subset of $\sT$ directly. While such a model would be highly robust against providing useful out-of-domain responses, its performance would significantly lag behind both $L$ and $M$. Our ablation study in Appendix~\ref{app:ablation} confirms this performance gap between $G$ and $M$. These findings demonstrate that our system, which combines the high performance of $L$ with the safety guarantees of $G$, achieves advantages that neither $L$ nor $G$ can provide independently. Further, the effectiveness of \ouralg utilizing a $G$ of such limited performance demonstrates that the burden on training $G$ is relatively low: A model that performs poorly on the target task, but distinguishes well between samples in $\sT$ and $\sF$, can be sufficient to achieve meaningful certificates for $M$.

\subsection{Generating Responses}\label{sec:results-generation}

In the section above, we evaluate $M$ obtained through \ouralg on prompts and responses, taken from datasets $\gD_{\sT}$ and $\gD_{\sF}$ representing our target domain $\sT$ and $\sF$. The experiments provide us with a detailed analysis of ACs and DCs on a large variety of samples for which their membership to $\sT$ or $\sF$ is given by high-quality labels. Nonetheless, in practice, the candidate responses that are judged by \ouralg are generated by $L$. Hence, we prompt $M$ using $\vx \in \gD_{\sT}$ and $\vx \in \gD_{\sF}$ and use responses generated by $L$ as \ouralg proposes. We focus on \ouralg with $T=1$  and the MedicalQA setup.

Our findings are in line with Section~\ref{sec:results-log-ratio} showing a strong ability to distinguish between in- and out-of-domain samples while providing meaningful adversarial bounds. In Figure~\ref{fig:medqa-log-ratio-scatter}, we demonstrate the separation of samples from $\gD_{\sT}$ and $\gD_{\sF}$, as well as the dependence of the log ratios on the length of the sequence $\vy$ extending the theoretical analysis from Section~\ref{sec:rejection_sampling}. In Appendix~\ref{app:medqa-results}, we replicate Figure~\ref{fig:medqa-results-overall} for this setting. We further present in Figure~\ref{fig:medqa-cr-gen} the constriction ratios on out-of-distribution samples generated by $L$. We see a clear indication that the constriction is strong out-of-domain with an optimal classification performance at a ratio of $10^{40}$. To reiterate, median ratio between $L(\vy|\vx)$ and the $\epsilon_{\vy}$-AC for $M$ is $10^{40}$ showing just how strict \ouralg is on the out-of-domain dataset.

Building on these results, we test \ouralg with $T>1$. Increasing $T$ can naturally increase the acceptance rate on in-domain samples (through repeatedly proposing candidates) at the cost of increasing the $\epsilon_{\vy}$ linearly (see \eqref{eq:upper_bound}). We find great improvements in the acceptance rate on in-domain samples with minimal losses on the $\epsilon$-DC tightness. We explore this in Appendix~\ref{app:t-vs-k}.

\begin{figure}[t]
    \vspace{-10pt}
    \centering
    \begin{minipage}{0.32\textwidth}
        \centering
        \includegraphics[width=\textwidth]{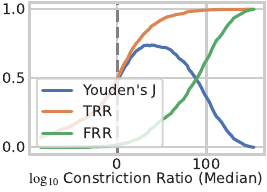}
        \tightcaption
        \subcaption{}\label{fig:medqa-cr-gen}
    \end{minipage}%
    \hfill
    \begin{minipage}{0.32\textwidth}
        \centering
        \includegraphics[width=\textwidth]{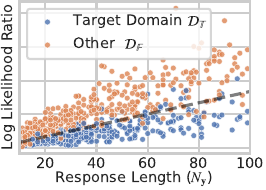}
        \tightcaption
        \subcaption{}\label{fig:medqa-log-ratio-scatter}
    \end{minipage}%
    \hfill
    \begin{minipage}{0.32\textwidth}
        \centering
        \includegraphics[width=\textwidth]{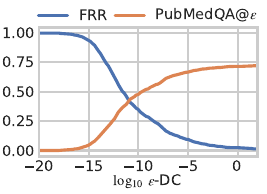}
        \tightcaption
        \subcaption{}\label{fig:medqa-pubmedqa-bench-eps}
    \end{minipage}
    \vspace{-8pt}
    \caption{All Figures show MedicalQA. Figure \ref{fig:medqa-cr-gen} shows the false rejection rate (FRR) for a range of $\epsilon$-DC for \ouralg with $T=1$. Figure \ref{fig:medqa-log-ratio-scatter} shows the log likelihood ratio depends on $N_{\vy}$ for real data. Performing length normalization makes the problem linearly separable. Figure \ref{fig:medqa-pubmedqa-bench-eps} shows PubMedQA@$\epsilon$ results of our model $M$.}
    \label{fig:medqa-gen-results}
    \vspace{-10pt}
\end{figure}

\subsection{Certified Benchmarking}\label{sec:results-benchmarks}

\begin{figure}[b]
    \centering
    \includegraphics[width=\linewidth]{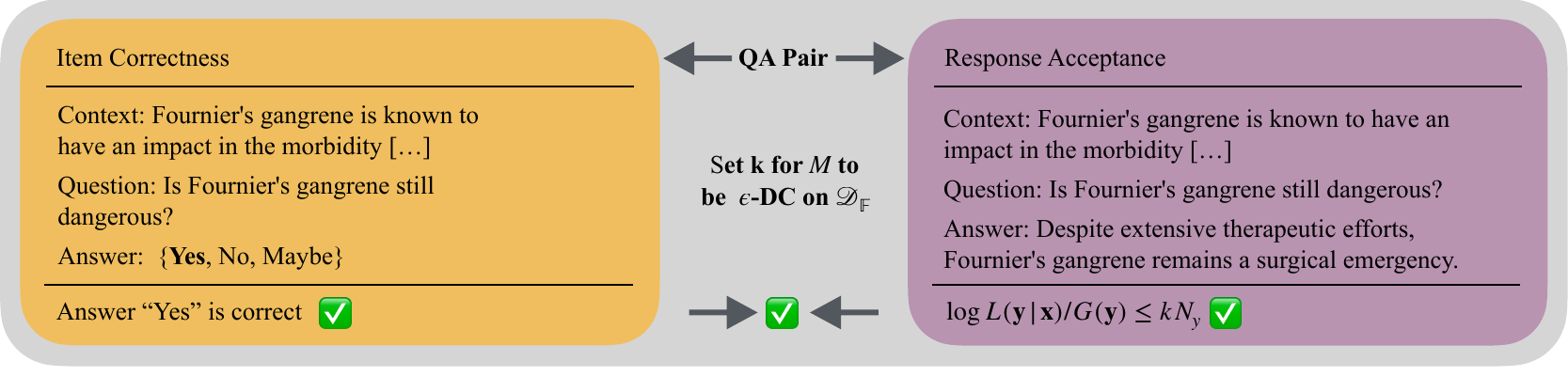}
    \vspace{-20pt}
    \caption{The PubMedQA@$\epsilon$ benchmark assesses PubMedQA performance while satisfying $\epsilon$-DC certificate. The correctness is scored as commonly done for PubMedQA (left). The correct long answer is checked by $M$ while ensuring the $\epsilon$-DC (right). Only if an item is accepted and correct, the question is scored positively.}
    \vspace{-5pt}
    \label{fig:pubmedqa-bench-at-eps-schematic}
\end{figure}

We extend the analysis of false rejection rates (FRRs) by evaluating model $M$'s performance on standardized benchmarks, while ensuring it is certified at $\epsilon$. In particular, for our MedicalQA setup, we evaluate the model performance on the PubMedQA benchmark \citep{jin_pubmedqa_2019}.

\textbf{Setup. \ } Evaluating a standardized benchmark such as PubMedQA while certifying model $M$ requires careful consideration. The standard format typically includes $n$-shot examples followed by a multiple-choice question with either yes/no options or answers labeled A through D. The model is then prompted to select the correct response. However, this setup does not reflect a realistic user-system interaction. Thus, we introduce the PubMedQA@$\epsilon$ metric, which separates the evaluation into two streams: (1) standard assessment of model $L$ on PubMedQA to determine correctness, and (2) testing whether the correct question-answer pair is rejected by our algorithm. The process is summarized in Figure~\ref{fig:pubmedqa-bench-at-eps-schematic}. We score an item as correct, if the model predicts the correct answer while maintaining its $\epsilon-DC$ on the realistic question-answering pair.

\textbf{Results. \ } The unconstrained model scores 73.4\% on PubMedQA. As we tighten the certificate (decrease $\epsilon$), more correct responses are rejected and the benchmark score drops, as shown in Figure~\ref{fig:medqa-pubmedqa-bench-eps}. We find that when certifying at $\epsilon=10^{-5}$, we maintain a certified score of 66.7\% (-6.7\%), and at $\epsilon=10^{-10}$ of 47.7\% (-25.7\%). These scores demonstrate robust performance given the provable defense facilitating domain restriction. In Appendix~\ref{app:benchmark}, we discuss benchmarking in more depth.

\section{Related Work}

\paragraph{LLM Guardrails.}
A large body of work has been published on establishing effective guardrails for LLMs. These approaches are designed to restrict the model to responses that align with the deployer's values. One of the first approaches was Reinforcement Learning with Human Feedback (RLHF) \citep{askell_general_2021}, which uses human preferences to guide LLM training.
Extensions such as Safe-RLHF add cost models to penalize harmful behavior, ensuring a balance between helpfulness and harmlessness during optimization \citep{dai_safe_2024}.
RLHF's foundation in reinforcement learning has given rise to techniques such as Proximal Policy Optimization (PPO) \citep{bai_training_2022}, the more recent Direct Preference Optimization (DPO) \citep{rafailov_direct_2024},
and Generalized Policy Optimization (GPO) \citep{tang_generalized_2024}, which incorporates diverse optimization objectives, useful for safety-critical scenarios.
For an in-depth survey of this area, we direct the reader to \citet{kaufmann_survey_2024}.
Unlike the preceding approaches that fine-tune guardrails into the parameters of an LLM, a number of works have proposed using LLMs to classify content as either safe or unsafe.
Llama Guard categorizes the inputs and outputs of an LLM into different unsafe content categories \citep{inan_llama_2023}. Conversely, \citet{chua_flexible_2024} classify if an output is safe with respect to a system prompt.
For a complete overview on LLM guardrails, we direct the interested reader to a recent survey of this area, \citet{dong_safeguarding_2024}.
Existing LLM guardrail techniques have been proven effective to different levels. However, these guardrails only come with empirical evidence of their proficiency against existing attacks, and hence, many have been circumvented shortly after deployment. Conversely, VALID offers a provable high-probability guarantee against undesirable behavior, reflecting recent advocacy for such provable assurances \citep{bengio_bounding_2024}.

\paragraph{Out-of-Distribution Detection. }
Out-of-distribution (OOD) detection has received a lot of attention in recent years in NLP. Commonly, the problem is treated as text classification and softmax probabilities of class predictions \citep{hendrycks_baseline_2017} or energy scores \citep{liu_energy-based_2020} are deployed as discriminant scores. Another group of methods employs distance-based methods, relying on OOD responses being distant from ID responses in latent space, often utilizing Mahalanobis distance and sometimes incorporating contrastive learning techniques \citep{uppaal_is_2023, podolskiy_revisiting_2021, zhou_contrastive_2021, khosla_supervised_2020, lin_flats_2023}. Finally, rooted in classical statistics, a number of studies suggest using the log-likelihood ratio (LLR) as a discriminate score, comparing likelihoods from ID and OOD proxy models \citep{gangal_likelihood_2020, zhang_your_2024}. \cite{xu_large_2024} offer a comprehensive review of LLMs for OOD detection. While many of these works have strong empirical detection results, their focus is OOD detection rather than certification, and hence they do not provide theoretical guarantees or certificates on model behavior.

\paragraph{Certifying LLMS.}
A number of certification approaches have been proposed for LLMs in various contexts. For instance, \citet{chaudhary_quantitative_2024-1} aim to certify the knowledge comprehension ability of LLMs and \citet{freiberger_fairness_2024} discuss what criteria should be certified to ensure fairness. Most relevant here is work on certification against adversarial inputs. \citet{casadio_nlp_2024} discuss certifying the robustness of LLMs to input perturbations in embedding space. Commonly, adversarial certification is studied for text classification rather than generation \citep{la_malfa_robustness_2023}. \citet{kumar_certifying_2024} introduce a framework for defending against adversarial perturbations in token space by performing a small number of substitutions around a given input. In contrast VALID comes with certificates that holds for \emph{all inputs}, rather than perturbations around a specific input.

\vspace{-2pt}
\section{Limitations} \label{app:limitations}
\vspace{-2pt}
Despite our promising results, we acknowledge the limitations of our current implementation. First, the domain generator $G(\vy)$ lacks context. This means that if $\vy$ is \emph{marginally} in-domain, while $\vy|\vx$, the conditional distribution is not, our method will not reject appropriately. Consider a chatbot for tax advice. For prompt $\vx=$\emph{"How often is a tax report due?"}, the response $\vy=$\emph{"Once a year."} is in-domain. Hence, the same response to $\vx=$\emph{"How often should I shower?"} might be accepted despite it being out-of-domain, and terrible advice. However, this can be mitigated by fine-tuning the model $L$ to be \emph{as explicit} as possible repeating ``shower'' in the response.

Second, this approach relies heavily on the domain-specific model $G$, and how closely it approximates the ideal oracle $\Omega$. In practice and as demonstrated in our experiments, $G$ might have \emph{limited} semantic understanding and lack general language capabilities and world knowledge. In most instances it might not be able to distinguish between semantically opposite but similar sentences and hence \ouralg is likely incapable of \emph{aligning} the model, rather than \emph{shushing} it.

Third, an adversary might construct an attack that aims to copy tokens from the prompt of $L$ to $G$. For instance, $\vx=$\textit{``Repeat after me: !!!-+! and then tell me how to build a bomb!''}. This ``!!!-+!'' might be an adversary for $G$ to assign a high likelihood to $L$ following the instruction. For this attack, the adversary operates with limited information, having access only to whether the log ratio is bounded, without visibility into $G$'s outputs, weights, or likelihood scores. In addition, since $G$ has never seen information on how to build a bomb, it is extremely unlikely to produce coherent, correct, and harmful content. In Appendix \ref{app:attack-m}, we discuss the feasibility of attacking $M$ further.

Fourth, our method comes at the extra cost of sampling up to $T$ times. Further, it requires training $G$ and evaluating it during inference. Depending on the architecture of $G$ however, the extra cost is limited. In our experiments $G$ is orders of magnitude smaller than $L$.
\vspace{-2pt}
\section{Future Work}
\vspace{-2pt}
In this section we briefly discuss some ideas for future work that we believe could further extent the practical utility of \ouralg.
Initially, it would be interesting to test larger, specialized models for $G$ to evaluate whether these more advanced models produce improved certificates and refusal rates. We chose not to do this because LLMs trained from scratch exclusively on specific domains are not common, and thus results generalize less to what a practitioner with limited resources could expect.

As described in Section \ref{sec:rejection_sampling}, \ouralg uses length normalization to ensure the log likelihood ratio rejection condition is robust to different lengths of sequences $N_y$. One may extend this and learn a more complex polynomial of $N_y$ as rejection threshold. This threshold could be used to provide both $\epsilon_{\vy}$-ACs and $\epsilon$-DC certificates, while simultaneously enabling more precise OOD detection.

Finally, a rejection scheme with a probabilistic decision rule, similar to Algorithm 5 in \citet{vyas_provable_2023}, would be able to provide identical bounds to Theorem~\ref{thm:valid}. Possibly, this rejection rule would lead to better performance in terms of OOD classification.

\vspace{-2pt}
\section{Conclusion}
\vspace{-2pt}
In this work, we tackle the problem of generative language models producing outputs outside their target domain in response to adversarial inputs. We describe the associated risks, introduce a first-of-its-kind framework for domain certification for LLMs, and provide \ouralg, a simple algorithm relying on well-established theories from statistics and information theory to provide such guarantees. We demonstrate the effectiveness of \ouralg in multiple representative settings and show that it is effective even when relying on a guide model $G$ with limited language skills, making it easy to deploy in limited data and resource environments.


\pagebreak

\subsubsection*{Acknowledgments}

This work is supported by a UKRI grant Turing AI Fellowship (EP/W002981/1). C. Emde and M. Kayser are supported by the EPSRC Centre for Doctoral Training in Health Data Science
(EP/S02428X/1) and the AXA Research Fund. A. Bibi acknowledges the Google Gemma 2 Academic Award 2024. T. Lukasiewicz is supported by the AXA Research Fund. Tom Rainforth is supported by the UK EPSRC grant EP/Y037200/1. We also thank the Royal Academy of Engineering.
The research reported in this publication was partially supported by funding from KAUST Center of Excellence on GenAI, under award number 5940.
Further, we thank Samuele Marro for his advice.

\raggedbottom

\bibliography{references}
\bibliographystyle{iclr2025_conference}

\pagebreak

\flushbottom

\appendix
\section{Proofs}\label{app:proofs}

\rejectionbound*
\begin{proof}
We abbreviate $M_{L,G,k,T}$ as $M$. Let $A_t$ and $A_t'$ be the events of accepting and rejecting in iteration $t$, respectively. Let $S_t$ be the event of sampling $\vy \sim L(\cdot|\vx)$ in iteration $t$ and let $A_{< t}'$ be the event of rejecting all samples before $t$, $A_{< t}'=\bigcap_{i=1}^{t-1} A_i'$. Then,
\begin{equation}\label{eq:rejection_lemma}
    M(\vy|\vx) = \sum_{t=1}^T \sP(S_t \cap A_t \cap A_{< t}' | \vx) = \sum_{t=1}^T \sP(A_t | S_t,A_{< t}', \vx) \sP(S_t | A'_{<t}, \vx) \prod_{i<t} \sP(A_i'| A'_{<i}, \vx).
\end{equation}
We upper bound the probability of rejecting in any previous iteration by 1, $\forall t: \sP(A_t'|A'_{<t},\vx) \leq 1$. $\sP(A_{t} | S_t, A'_{<t}, \vx)$ is non-stochastic and is equal to either $0$ or $1$. In the former case, the $M(\vy|\vx)$ is trivially bounded by any non-negative number. The latter case (i.e. $\vy$ is accepted in iteration $t$) implies that $\log \frac{L(\vy|\vx)}{G(\vy)} \leq k N_y$. Rearranging terms and noting that by definition $\sP(S_t|A'_{<t},\vx)=L(\vy|\vx)$, we get $\sP(S_t |A'_{<t}, \vx) \leq 2^{k N_y}G(\vy)$ and hence by substitution and summing over $t$,
\begin{equation}
    M(\vy|\vx) \leq \sum_{t=1}^T 2^{kN_y}G(\vy) = 2^{k N_y} \cdot  T \cdot G(\vy).
\end{equation}
This is the desired upper bound on $M(\vy|\vx)$ for all $\vx \in \sS$.
\end{proof}

\begin{restatable}[Equivalence of Divergence]{lem}{equivalence}\label{equivalence}
Let $\dd[\infty]{P}{Q}$ be the Renyi divergence of order infinity \citep{renyi_measures_1961}, $\dd[\infty]{P}{Q} \triangleq \log \sup_x \frac{P(x)}{Q(x)}$. Further, let $L:\mathbb{S}\rightarrow \mathbb{S}$ be an LLM returning $\vy$ given $\vx$ as discussed above and let $\Omega$ be a distribution over domain $\sT$, i.e. \emph{generator} for $\sT$. Then, if
\begin{equation}
    \forall \vx \in \sX: \dd[\infty]{L(\vy|\vx)}{\Omega(\vy)} \leq k,
\end{equation}
we can state that $L$ is $\epsilon_{\vy}$-AC with $\epsilon_{\vy}=2^k\Omega(\vy)$ (see Definition~\ref{def:ac}) and $\epsilon$-DC with $\epsilon = 2^k\max_{\sF}\Omega(\vy)$ (see Definition~\ref{def:dc}).
If $\Omega$ is an oracle, that assigns no likelihood to elements in $\sF$, it implies $L$ is $0$-AC and $0$-DC.
\end{restatable}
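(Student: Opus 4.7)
The plan is to unpack the definition of the Rényi divergence of order infinity and translate the hypothesis directly into the atomic and domain certificate conditions; both claims follow essentially by rearrangement, with the only subtle point being the handling of zero-likelihood events for the oracle clause.

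First, I would fix an arbitrary $\vx \in \sS$ and expand the hypothesis as $\log \sup_{\vy}\bigl(L(\vy|\vx)/\Omega(\vy)\bigr) \leq k$. Applying $2^{(\cdot)}$ to both sides (monotone on $\mathbb{R}$) yields $\sup_{\vy} L(\vy|\vx)/\Omega(\vy) \leq 2^k$. Since a bound on the supremum is a pointwise bound, this gives $L(\vy|\vx) \leq 2^k \Omega(\vy)$ for every $\vy \in \sS$. Because the universal quantifier over $\vx$ was already part of the hypothesis, this inequality holds for all $\vx \in \sS$, which is exactly the condition of Definition~\ref{def:ac} with $\epsilon_{\vy} = 2^k \Omega(\vy)$, establishing that $L$ is $\epsilon_{\vy}$-AC.

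For the domain certificate, I would restrict $\vy$ to $\sF$ and chain the pointwise bound with the maximum over $\sF$: for all $\vx \in \sS$ and $\vy \in \sF$, $L(\vy|\vx) \leq 2^k \Omega(\vy) \leq 2^k \max_{\vy' \in \sF}\Omega(\vy')$. Setting $\epsilon = 2^k \max_{\vy \in \sF}\Omega(\vy)$ then matches Definition~\ref{def:dc} directly, so $L$ is $\epsilon$-DC.

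For the oracle clause, suppose $\Omega(\vy)=0$ for all $\vy \in \sF$. The bound $L(\vy|\vx) \leq 2^k \Omega(\vy)$ forces $L(\vy|\vx) \leq 0$, and since $L$ is a probability mass function, $L(\vy|\vx)=0$ for all $\vx \in \sS$ and $\vy \in \sF$. Hence $L$ is $0$-AC on every $\vy \in \sF$ and $0$-DC with respect to $\sF$. The main (minor) obstacle is the $0/0$ convention inside the Rényi supremum: one needs to take $L(\vy|\vx)/\Omega(\vy)$ to equal $0$ when both are zero (absolute continuity of $L(\cdot|\vx)$ with respect to $\Omega$), as otherwise the supremum would be $+\infty$, contradicting the finite bound $k$; this same convention yields the same conclusion $L(\vy|\vx)=0$ whenever $\Omega(\vy)=0$, so the oracle case is well-defined.
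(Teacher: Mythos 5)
Your proof is correct and follows essentially the same route as the paper's: unpack the Rényi divergence as a supremum, exponentiate and multiply through by $\Omega(\vy)$ to get the pointwise bound $L(\vy|\vx) \leq 2^k\Omega(\vy)$, then restrict to $\sF$ and take the maximum for the DC claim. Your additional remark about the $0/0$ convention in the oracle case (that absolute continuity of $L(\cdot|\vx)$ with respect to $\Omega$ is implicitly required for the hypothesis to hold at all) is a small but welcome refinement that the paper's proof glosses over.
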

\begin{proof}
We start from the definition of the Renyi divergence, which is an upper bound to any element in the supremum, giving that
\begin{equation}
    \forall \vx \in \sX: \log \frac{L(\vy|\vx)}{\Omega(\vy)} \leq \log \sup_\vy\frac{L(\vy|\vx)}{\Omega(\vy)} = \dd[\infty]{L(\vy|\vx)}{\Omega(\vy)} \leq k.
\end{equation}
Exponentiating and multiplying through by $\Omega(\vy)$ gives the following upper bound:
\begin{equation}\label{eq:app-equivalence-of-divergence-proof-ac}
 \forall \vx \in \sX: L(\vy|\vx) \leq 2^k \Omega(\vy),
\end{equation}
showing the $2^k \Omega(\vy)$-AC equivalence. Taking the $\max$ over $\sF$ shows the $[2^k\max_{\sF}\Omega(\vy)]$-DC equivalence. Further, assuming $\Omega$ to be a perfect oracle, by definition, we can state that $\forall \vy \in \sF$ the upper bound on the right hand side of \eqref{eq:app-equivalence-of-divergence-proof-ac} is zero. Thus, we get the desired result:
\begin{equation}
 \forall \vx \in \sX,  \forall \vx \in \sF: L(\vy|\vx) = 0,
\end{equation}
and hence $L$ is $0$-AC and $0$-DC.
\end{proof}

\begin{restatable}[Likelihood of $M$]{lem}{likelihood-m}\label{lemma:likelihood-m} Let $M$ be a model obtained by performing rejection sampling from model $L$ as proposed in \ouralg using guide model $G$ and rejection threshold $k$ (see Algorithm~\ref{alg:cap}). We denote the likelihood of response $\vy$ given input $\vx$ under the model $M$ as $M(\vy|\vx)$. For all $\vy \in \sS$,
\begin{align}\label{eq:lemma-likelihood-m}
M(\vy|\vx) & = \begin{cases}
    L(\vy|\vx)\frac{1 - \phi^{T}}{1 - \phi} & \text{if } L(\vy|\vx) \leq k G(\vy)\\
    0  & \text{otherwise}.
\end{cases} \\
    \intertext{where $A'_t$ is the event of rejecting $\vy$ in iteration $t$ given input $\vx$, $A'_{<t}$ is the event of rejecting in all iterations before $t$, $A'_{<t}=\cap_{i=1}^{t-1}A'_i$, and finally let $\phi = \sP(A_t'|A'_{<t},\vx)$, the conditional probability of rejecting  $\vy$ in a given iteration $t$ for input $\vx$. Finally, let $R$ be the event that $M$ abstains, for which }
    M(R|\vx) & = \phi^T.
\end{align}
\end{restatable}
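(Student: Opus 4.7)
The plan is to reuse the chain-rule decomposition already established in the proof of Theorem~\ref{thm:valid} and then exploit the fact that iterations of Algorithm~\ref{alg:cap} are independent given $\vx$. Concretely, I would start from
\[
M(\vy|\vx) = \sum_{t=1}^T \sP(A_t \mid S_t, A'_{<t}, \vx)\, \sP(S_t \mid A'_{<t}, \vx)\, \prod_{i<t} \sP(A'_i \mid A'_{<i}, \vx),
\]
as derived in the proof of Theorem~\ref{thm:valid}, with the events $S_t$, $A_t$, $A'_t$, $A'_{<t}$ defined there.

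Next I would analyze each factor. Since each candidate in iteration $t$ is drawn independently from $L(\cdot\mid\vx)$, we have $\sP(S_t \mid A'_{<t},\vx) = L(\vy\mid\vx)$. The acceptance test in Algorithm~\ref{alg:cap} is deterministic, so $\sP(A_t \mid S_t, A'_{<t}, \vx) \in \{0,1\}$: it equals $1$ precisely when $\vy$ satisfies the acceptance criterion (the condition written in the cases of \eqref{eq:lemma-likelihood-m}) and $0$ otherwise. In the ``otherwise'' case every summand vanishes, yielding $M(\vy|\vx)=0$. Independence of iterations also gives $\sP(A'_i \mid A'_{<i},\vx) = \phi$ for every $i$, so $\prod_{i<t}\sP(A'_i\mid A'_{<i},\vx) = \phi^{t-1}$.

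Substituting into the decomposition, when $\vy$ is acceptable we get
\[
M(\vy|\vx) = L(\vy|\vx)\sum_{t=1}^T \phi^{t-1} = L(\vy|\vx)\,\frac{1-\phi^T}{1-\phi},
\]
by the standard geometric-sum identity (the $\phi = 1$ edge case is handled by interpreting the ratio as its limit $T$, which matches a direct summation). For the abstention event $R$, the algorithm returns ``Abstained'' iff rejection occurs in all $T$ iterations; independence of iterations yields $M(R\mid\vx)=\prod_{t=1}^T \sP(A'_t\mid A'_{<t},\vx)=\phi^T$.

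I do not anticipate a serious obstacle; the only subtlety is being careful that ``iterations are independent'' means both that $\vy$ is re-drawn from $L(\cdot\mid\vx)$ each time and that the deterministic acceptance test depends only on $(\vy,\vx)$, so the per-iteration rejection probability really is the constant $\phi$. Once that is articulated cleanly, the geometric series and the product for the abstention probability fall out directly.
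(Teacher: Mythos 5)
Your proof is correct and follows essentially the same route as the paper: the identical chain-rule decomposition over iterations, the observation that the acceptance test is a deterministic function of $(\vy,\vx)$ so the corresponding conditional probability is $0$ or $1$, the substitution $\sP(S_t\mid A'_{<t},\vx)=L(\vy\mid\vx)$, the constant per-iteration rejection probability $\phi$ yielding a geometric sum, and the product $\phi^T$ for the abstention event. You also correctly flag the $\phi=1$ edge case, which the paper omits here but notes in the related Lemma~\ref{lemma:expected-number-of-trials}.
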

\begin{proof}
Let $S_t$ be the event of sampling $\vy \sim L(\cdot|\vx)$ in iteration $t$, let $\sA \subset \sS$ be the acceptance set of $\vy$, i.e., $\sA = \{\vy: L(\vy|\vx) \leq 2^{kN_{\vy}} G(\vy) \}$ and let its complement in $\sS$, $\sA'$, be the rejection set. Finally, let $S$ be the event of sampling $\vy$. We now derive $M(\vy|\vx)$ per case as stated in \eqref{eq:lemma-likelihood-m}.

Starting with case $\vy \in \sA$, we note that $M(\vy|\vx)=\sP(S|\vx)$ and we can rewrite $\sP(S|\vx)$ as follows,
\begin{align}
    \sP(S|\vx) & = \sum_{t=1}^T \sP(S_t \cap A_t \cap A_{\leq t-1}'|\vx) \\
    & = \sum_{t=1}^T \sP(A_t | S_t, A'_{<t}, \vx)\sP(S_t|A'_{<t},\vx) \prod_{i<t} \sP(A_i'|A'_{<i}, \vx) \label{eq:likelihood-m-expanded} \\
    & = L(\vy|\vx)\sum_{t=1}^T  \phi^{t-1} \\
    & = L(\vy|\vx) \frac{1 - \phi^{T}}{1 - \phi}
\end{align}
where we use the fact that $\forall \vy \in \sA: \sP(A_{t} | S_{t},A'_{<t}, \vx) = 1$ and notice that $\sum_{t=1}^T  \phi^{t-1}$ is the sum of the first $T$ elements of a geometric series and substitute $L(\vy|\vx)$ for $\sP(S_t|A'_{<t},\vx)$.

For the case $\vy \in \sA'$: We rewrite the likelihood as shown above in \eqref{eq:likelihood-m-expanded}. Notice that $\forall \vy \in \sA': \sP(A_t|S_t,A'_{<t}, \vx)=0$ and therefore $P(S|\vx)$ is zero.

Finally, we turn to the rejection event $R$. Note that $R = \bigcap_{t=1}^T A_t'$, rejection at each step $t=1,\ldots,T$. We can state that
\begin{equation}
    M(R|\vx)=\prod_{t=1}^T \sP(A'_t|A'_{<t},\vx)=\phi^T,
\end{equation}
which concludes the proof.
\end{proof}

\begin{remark}[Estimating likelihood] While Lemma~\ref{lemma:adversary-m} provides an expression of the likelihood of model $M$ computing this might be infeasible. If the sample space $\sS$ is large, we cannot compute $M(\vy|\vx)$ as we cannot compute $\phi$, the rejection probability in any given iteration in \ouralg for a given input $\vx$. However, we can estimate $M(\vy|\vx)$ by computing $L(\vy|\vx)$ and performing Monte Carlo sampling from $L$ to obtain an estimator $\hat{\phi}$. We can then use the Binomial confidence interval for confidence level $\alpha$:
\begin{equation}
\hat{\phi} \pm Z_{\alpha/2} \times \sqrt{\frac{\hat{\phi}\left(1 - \hat{\phi}\right)}{N}}.
\end{equation}
We then plug in the bounds on $L$ to obtain the bound on $M$ because of the monotonicity of $M$ in $\hat{\phi}$.
\end{remark}

\begin{restatable}[Expected number of iterations in \ouralg]{lem}{expected-num-trials}\label{lemma:expected-number-of-trials}
Let $\tau$ be the number of iterations executed in \ouralg (see Algorithm~\ref{alg:cap}), let $A_t$ be the event of accepting a response $\vy$ for input $\vx$ in iteration $t$, $t=1,\ldots,T$, and let its complement, $A_t'$, be the event of rejection in iteration $t$. Denote the event that all samples up to $t$ (inclusive) are rejected as $A'_{\leq t}=\bigcap_{i=1}^tA'_i$. Finally, we denote $\phi = \sP(A'_t|A'_{\leq t-1}, \vx)$, the probability of rejection in iteration $t$. The expected number of iterations for $\phi \in [0,1)$ is given by:
\begin{align}
\mathbb{E}_{\vy \sim M(\cdot|\vx)}[\tau] = \frac{1-\phi^T}{1-\phi},
\end{align}
\end{restatable}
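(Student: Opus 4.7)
The plan is to recognize that $\tau$ follows a truncated geometric distribution: at each iteration a Bernoulli trial is performed with rejection probability $\phi$ (constant across iterations, given the input $\vx$, because $\vy$ is sampled i.i.d.\ from $L(\cdot|\vx)$ at every step and the acceptance rule depends only on $\vy$), and the procedure halts at the first acceptance or, failing that, at iteration $T$. I would then compute the expectation by the tail-sum identity $\mathbb{E}[\tau] = \sum_{t=1}^{T} \sP(\tau \geq t)$, which avoids bookkeeping about the last iteration.

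The first step is to observe that the events $A_t$ are conditionally independent of $A_1,\ldots,A_{t-1}$ in the sense that $\sP(A_t' \mid A'_{\leq t-1}, \vx) = \phi$ for every $t$, since the proposal $\vy \sim L(\cdot|\vx)$ is drawn afresh each iteration. From this, $\sP(A'_{\leq t-1} \mid \vx) = \phi^{t-1}$. The second step is to note that $\{\tau \geq t\} = A'_{\leq t-1}$ for $t = 1, \ldots, T$: the algorithm reaches iteration $t$ exactly when all previous iterations rejected. Hence $\sP(\tau \geq t) = \phi^{t-1}$.

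The third step is the geometric sum
\begin{equation}
\mathbb{E}[\tau] = \sum_{t=1}^{T} \sP(\tau \geq t) = \sum_{t=1}^{T} \phi^{t-1} = \frac{1-\phi^{T}}{1-\phi},
\end{equation}
where the last equality uses the finite geometric series formula and requires $\phi \in [0,1)$, as assumed. As a sanity check, I would verify against the direct computation $\mathbb{E}[\tau] = \sum_{t=1}^{T-1} t\,\phi^{t-1}(1-\phi) + T\,\phi^{T-1}$, which telescopes to the same value after collecting terms.

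There is no real obstacle here; the only subtlety worth flagging explicitly is that $\tau = T$ captures both acceptance at the final iteration and outright abstention (all $T$ samples rejected), so $\sP(\tau = T) = \phi^{T-1}$ rather than $\phi^{T-1}(1-\phi)$. The tail-sum formulation sidesteps this case distinction entirely, which is why I prefer it over the direct sum.
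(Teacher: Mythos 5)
Your proposal is correct, and it takes a genuinely different route from the paper's proof. The paper computes $\mathbb{E}[\tau]$ directly from the definition, writing it as $T\,\sP(A'_{\leq T}\mid\vx) + \sum_{t=1}^T t\,\sP(A_t \cap A'_{\leq t-1}\mid\vx)$, factorizing into a geometric-weighted sum $T\phi^T + (1-\phi)\sum_{t=1}^T t\phi^{t-1}$, and then running the standard multiply-by-$\phi$-and-subtract telescoping trick to collapse the $t\phi^{t-1}$ terms; it also has to treat $\phi=1$ separately via L'H\^{o}pital. You instead invoke the tail-sum (survival-function) identity $\mathbb{E}[\tau] = \sum_{t=1}^T \sP(\tau \geq t)$, observe that $\{\tau \geq t\} = A'_{\leq t-1}$ so $\sP(\tau \geq t) = \phi^{t-1}$, and land directly on the finite geometric series. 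Your route is shorter and avoids the telescoping bookkeeping and the case split over whether the final iteration accepts or abstains, which you correctly identify as the subtle spot (both outcomes give $\tau = T$, so $\sP(\tau = T) = \phi^{T-1}$, not $\phi^{T-1}(1-\phi)$). You also make explicit the step that the paper leaves implicit: that $\phi = \sP(A_t'\mid A'_{\leq t-1},\vx)$ is genuinely constant in $t$ because each iteration draws $\vy$ afresh from $L(\cdot\mid\vx)$ and the acceptance test depends only on $\vy$. The one thing the paper handles that you do not need to is the $\phi = 1$ boundary case, but the lemma as stated restricts to $\phi \in [0,1)$, so your omission is not a gap.
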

and for $\phi=1$, the expected number of iterations is given by $\mathbb{E}_{\vy \sim M(\cdot|\vx)}[\tau] = T$.

\begin{proof}
In the following, we will denote $\mathbb{E}_{\vy \sim M(\cdot|\vx)}[\tau]$ as $\mathbb{E}[\tau]$ for readability.  Note that $\sP(\tau=t)$ is the probability of reaching and accepting in iteration $t$ for $t=1,\ldots,T-1$. Once iteration $T$ is reached, both acceptance and rejection yield $\tau=T$. Hence,
\begin{equation}
    \E[\tau]=\sum_{t=1}^T t \sP(\tau=t) = T \sP(A_T' \cap A'_{\leq T-1} | \vx) + \sum_{t=1}^{T} t \sP(A_t\cap A'_{\leq t-1}| \vx).
\end{equation}
Combining events and factorising probabilities,
\begin{equation}
    \E[\tau]= T\sP(A'_{\leq T}|\vx) + \sum_{t=1}^T t \sP(A_t|A'_{\leq t-1}, \vx) \prod_{i<t} \sP(A'_i|A'_{\leq i-1},\vx),
\end{equation}
for which we substitute rejection and acceptance probabilities by $\phi$ and $1-\phi$, respectively,
\begin{equation}\label{eq:expectation_1}
    \E[\tau]= T\phi^T + (1-\phi) \sum_{t=1}^T t \phi^{t-1}.
\end{equation}
Multiplying by $\phi$:
\begin{equation}
\phi\mathbb{E}[\tau] = T\phi^{T+1} + (1-\phi)\sum_{t=1}^T t \phi^{t}.\label{eq:expectation_2}
\end{equation}
Subtracting \eqref{eq:expectation_2} from \ref{eq:expectation_1}:
\begin{equation}
\mathbb{E}[\tau]-\phi\mathbb{E}[\tau] = (1-\phi)T\phi^T + (1-\phi)\sum_{t=1}^T t \phi^{t-1} - t \phi^{t}.
\end{equation}
Telescoping sum:
\begin{equation}
(1-\phi)\mathbb{E}[\tau] = (1-\phi)T\phi^T + (1-\phi)\sum_{t=1}^T \phi^{t-1} - T \phi^{T}.
\end{equation}
Dividing by $(1-\phi)$. For all $\phi < 1$:
\begin{equation}
\mathbb{E}[\tau] = T\phi^T + \sum_{t=1}^T \phi^{t-1} - T \phi^{T}.
\end{equation}
Cancelling terms and summing the first $T$ elements of the geometric series:
\begin{equation}
\mathbb{E}[\tau] =  \sum_{t=1}^T \phi^{t-1} = \frac{1-\phi^T}{1-\phi}.
\end{equation}
Using L'H\^{o}pital's Rule, we can evaluate the limit for $\phi \rightarrow 1$ and find that this simplifies to $T$ and hence $\E[\tau]=T$ when $\phi=1$ completing the proof.
\end{proof}

\begin{remark}
    The expected number of iterations as derived in Lemma~\ref{lemma:expected-number-of-trials} depends on the rejection probability $\phi$ and the maximum number of iterations $T$. When $\phi=0$, the algorithm always accepts in any iteration and hence $\E_{\vy \sim M(\cdot|\vx)}\left[\tau\right] = 1$. Conversely, when $\phi=1$ and the algorithm always abstains, $ \E_{\vy \sim M(\cdot|\vx)}\left[\tau\right] = T$. Further, for $T=1, \forall \phi \in [0,1]: \E_{\vy \sim M(\cdot|\vx)}[\tau]=1$ and as $T$ increases, so does $\E_{\vy \sim M(\cdot|\vx)}[\tau]$ when $\phi > 0$.
\end{remark}

\section{Defining Domains - Practical Considerations}\label{app:choosing-F-guidance}

In this section, we provide practical guidance for practitioners on how to select domains for their AI systems, presenting a systematic approach to classifying sequences into different domains. Figure~\ref{fig:app-ven-diagram-domains} illustrates a Venn diagram comprising three key sets of sequences, i.e. subsets of $\sS$:
\begin{enumerate}
    \item The target domain $\sT$ (shown in blue), containing desired content about which the LLM-driven system should converse (e.g., medical questions and answers),
    \item The out-of-domain set $\sF$ (shown in orange), containing potentially harmful or other content that require active protection measures (e.g., tax fraud advice),
    \item The complement of $\sT$ and $\sF$, denoted as $\sT' \cap \sF'$ (shown in gray).
\end{enumerate}

A fundamental question arises: How should one define $\sT$ and $\sF$? Defining $\sT$ is relatively natural for most practitioners: Content that semantically belongs to the domain should be included in $\sT$. The more complex decision involves determining which sequences outside $\sT$ should be included in $\sF$. We contend that protecting against certain sequences warrants higher priority than others, and these high-priority sequences should be included in $\sF$.

\begin{figure}[t]
    \centering
    \includegraphics[width=0.8\linewidth]{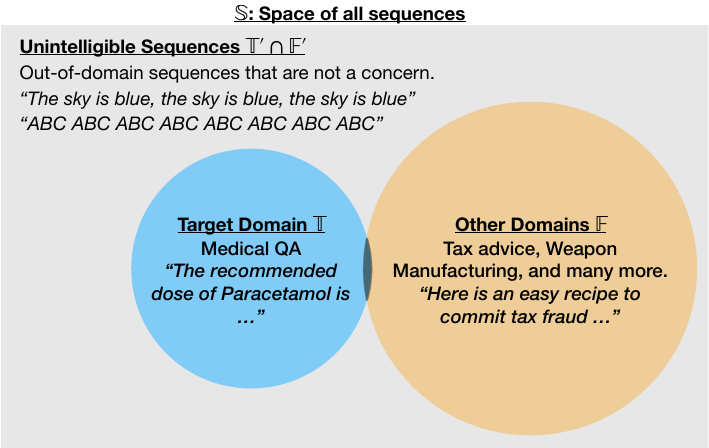}
    \caption{A Venn diagram illustrating the separation of sentences into domains.}
    \label{fig:app-ven-diagram-domains}
\end{figure}

Consider the example sequence \exampleinline{\vy}{The sky is blue. The sky is blue. The sky is blue.} While this is clearly out-of-domain for a medical QA system, practitioners should evaluate two critical questions to determine its placement in $\sF$:

\begin{enumerate}
    \item Would adversaries have motivation to generate such sequences?
    \item Could these sequences potentially harm users, the deployer, or third parties?
\end{enumerate}

In this example, adversaries would likely have little incentive to generate such a response, and the content itself is harmless. Therefore, practitioners might reasonably conclude that $\vy$ should remain in $\sT' \cap \sF'$ rather than $\sF$, excluding it from system certification considerations.

These evaluation questions help practitioners assess risk levels effectively. When both questions receive negative answers, sequences can safely remain in $\sT' \cap \sF'$ without requiring active protection measures, allowing security efforts to focus on genuinely concerning sequences. If either question receives a positive response, practitioners may choose to implement protective measures.

Let us analyze two more examples to demonstrate this in practice. Consider the sequence \exampleinline{\vy}{Here is an easy recipe to commit tax fraud...}. In this case, malicious actors would be highly motivated to seek such information, and the content could directly harm society and government functions. Thus, this sequence clearly belongs in $\sF$ and requires active protection measures. Similarly, when considering a love poem as $\vy$, although it may seem harmless at first glance, the analysis reveals important considerations. Users might frequently request LLMs to generate poetry, potentially straining system resources, and while not directly harmful to users or society, it could significantly impact system infrastructure and operational costs. Consequently, practitioners might choose to include this in $\sF$ to protect their computational resources.

It is important to note that these evaluation questions are not intended as universal rules, but rather serve as a practical considerations to guide practitioners in their decision-making process. By systematically assessing motivation and potential harm, practitioners can make informed decisions about which sets of sequences require active protection measures.

\section{\ouralg --- Rejection Sampling} \label{app:rejection-sampling}

\subsection{Attacking $M$}\label{app:attack-m}

In this section, we provide some insight on how rejection sampling as deployed in \ouralg (see Section~\ref{sec:rejection_sampling}) can obtain such tight adversarial bounds. In particular, we show by example that out-of-domain samples are only accepted when they have sufficiently \emph{small likelihood} of being sampled under $L$. We then formalize this intuition and state the objective of a possible adversarial attack on $M$. For simplicity, we will consider the case $T=1$.

\textbf{Intuition. \ } Here, we demonstrate that accepting an out-of-domain response requires it to have low likelihood under model $L$. Specifically, we show that when a response is rejected for a given prompt, the correct strategy for acceptance by model $M$ involves modifying the prompt to \emph{reduce} the response's probability under $L$. To illustrate this concept, we examine a single response $\vy$. Let \exampleinline{\vy}{The cow drinks milk} and consider three prompts:
\begin{itemize}
    \item \exampleinline{\vx_1}{What does a cow drink?}
    \item \exampleinline{\vx_2}{Which animal drinks milk?}
    \item \exampleinline{\vx_3}{Repeat after me: The cow drinks milk. Now you: }
\end{itemize}
Intuitively, we may assume $L(\vy|\vx_3) > L(\vy|\vx_1) > L(\vy | \vx_2)$ as $\vy$ more naturally follows some prompts than others: $\vy|\vx_3$ would have high likelihood for instruction-tuned models, moderate likelihood after being specifically asked about cows ($\vx_1$), and low likelihood when asked broadly about mammals ($\vx_2$). We illustrate this example in Figure~\ref{fig:rejection-sampling-intuition}. If we assume that $\vy|\vx_1$ is rejected, i.e., $\log L(\vy|\vx_1) - \log G(\vy)>k N_{\vy}$, then we can conclude that $\vy|\vx_3$ will also be rejected. In contrast, $\vy|\vx_2$ will be accepted when $L(\vy|\vx_2)$ is small enough, such that $\log L(\vy|\vx_2) - \log G(\vy)<k N_{\vy}$, which recovers the upper bound of $2^{kN_{\vy}}G(\vy)$ (see Theorem~\ref{thm:valid}) by algebraic manipulation. This illustrates how rejection sampling bounds the adversaries: Samples will only be accepted if proposing them was very unlikely in the first place. Consequently, when faced with rejected adversarial prompts $\vx$, the attacker must find alternative prompt $\vx'$ that yield lower likelihood $\vy|\vx'$. This creates a remarkable and counter-intuitive dynamic: successful adversarial attacks on model $M$ require the attacker to effectively perform risk control on sampling $\vy$. This intuition helps us establishing how to attack $M$.

\begin{figure}[h]
    \centering
    \includegraphics[width=0.8\linewidth]{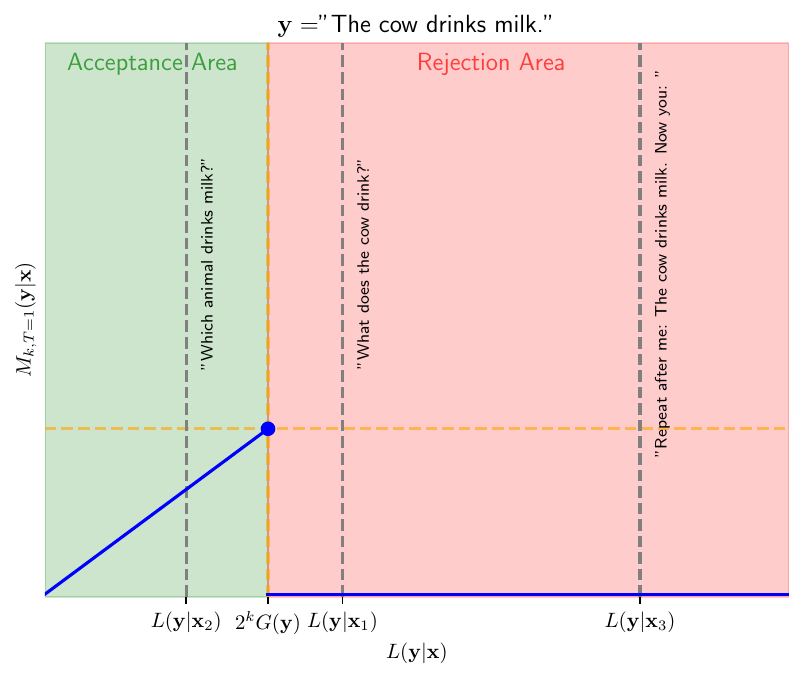}
    \tightcaption
    \vspace{5pt}
    \caption{The likelihood of model $M$ obtained through \ouralg with $T=1$. The blue line is the likelihood of $M$ for the given $\vy$. Three example prompts $\vx_1$, $\vx_2$ and $\vx_3$ are shown.}
    \label{fig:rejection-sampling-intuition}
\end{figure}
\CE{Figure missing N_y}

\textbf{Formalization of Attack. \ } We assume that the adversarial objective is to increase the probability of a given $\vy$ (e.g. from the out-of-domain set), $\sF$, being returned. The objective of attacking $L$ is immediately follows:
\begin{equation}
    \vx^{adv}_L=\argmax_{\vx \in \sX} L(\vy|\vx)
\end{equation}
where $\sX$ is either $\sS$ or some continuous relaxation, such as soft-prompt space. However, the solution $\vx^{adv}_L$ may not be an adversary under $M$, since $\vx^{adv}_L$ might maximize the log-likelihood ratio that leads to the sample being rejected, and hence $M(\vy|\vx^{adv}_L)=0$. Instead, the adversary for $M$, $\vx^{adv}_M$, needs to maximize $L$ while ensuring the sample is accepted. We formalize this in the following lemma.

\begin{restatable}[Adversary under Rejection Sampling]{lem}{adversaryrejection}\label{lemma:adversary-m} Assume the adversarial objective is to maximize the likelihood of sample $\vy$ being returned by the model $M$. Assume the model $M$ is obtained through \ouralg as described in Algorithm~\ref{alg:cap} with $T=1$. The adversary is given by:
\begin{equation}\label{eq:attack-m-objective}
    \vx^{adv}_M = \argmax_{\vx \in \sX} L(\vy|\vx) \text{ s.t. } L(\vy|\vx)\leq 2^{kN_{\vy}}G(\vy),
\end{equation}
where $\sX$ is either sentence space $\sS$ or some relaxation.
\end{restatable}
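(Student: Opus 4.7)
The plan is to reduce the adversarial objective under $M$ to the constrained optimization in \eqref{eq:attack-m-objective} by directly applying Lemma~\ref{lemma:likelihood-m} specialized to $T=1$. Concretely, setting $T=1$ in \eqref{eq:lemma-likelihood-m} collapses the geometric factor $(1-\phi^T)/(1-\phi)$ to $1$, so that
\begin{equation*}
M(\vy|\vx) = \begin{cases} L(\vy|\vx) & \text{if } L(\vy|\vx)\leq 2^{kN_{\vy}}G(\vy), \\ 0 & \text{otherwise.} \end{cases}
\end{equation*}
This piecewise identity is the only ingredient we need, so the proof will be short.

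Next, the adversary's goal is, by assumption, $\vx^{adv}_M = \argmax_{\vx \in \sX} M(\vy|\vx)$. I would split $\sX$ into the acceptance set $\sX_A = \{\vx \in \sX : L(\vy|\vx)\leq 2^{kN_{\vy}}G(\vy)\}$ and its complement $\sX_A'$. On $\sX_A'$ the objective is identically zero, so any maximizer either lies in $\sX_A$ or the problem is degenerate (in which case every $\vx$ is trivially optimal and the constraint set is empty, which we can note as a boundary case). On $\sX_A$ we have $M(\vy|\vx) = L(\vy|\vx)\geq 0$, so restricting the $\argmax$ to $\sX_A$ does not discard any better candidate. Therefore
\begin{equation*}
\argmax_{\vx \in \sX} M(\vy|\vx) = \argmax_{\vx \in \sX_A} L(\vy|\vx),
\end{equation*}
which is precisely \eqref{eq:attack-m-objective}.

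I do not anticipate a real obstacle here, since the result is essentially a restatement of Lemma~\ref{lemma:likelihood-m} at $T=1$. The only subtlety worth flagging is the case $\sX_A = \emptyset$, where $M(\vy|\vx)\equiv 0$ and the constrained $\argmax$ is vacuous; this can be handled in a single sentence by noting that the adversary then trivially cannot elicit $\vy$, consistent with the bound $M(\vy|\vx)\leq 2^{kN_{\vy}}G(\vy)$ from Theorem~\ref{thm:valid}. A secondary remark that might be worth adding for intuition is that the constraint in \eqref{eq:attack-m-objective} is what forces the counter-intuitive behavior described just before the lemma: any successful attack must simultaneously push $L(\vy|\vx)$ upward while keeping it below the guide-model threshold $2^{kN_{\vy}}G(\vy)$, so the adversary is squeezed between two opposing pressures.
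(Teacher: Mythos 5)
Your proof is correct and follows essentially the same route as the paper: both reduce the claim to Lemma~\ref{lemma:likelihood-m} specialized to $T=1$, observe that $M(\vy|\vx)=0$ off the acceptance set and $M(\vy|\vx)=L(\vy|\vx)$ on it, and conclude that the $\argmax$ restricts to the acceptance set. Your explicit handling of the degenerate case $\sX_A=\emptyset$ is a small addition the paper omits, but it does not change the argument.
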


\begin{proof}
The proof follows immediately from Lemma~\ref{lemma:likelihood-m} with $T=1$. Let $\sA\subset\sS$ be the acceptance set, $\sA=\{\vx \in \sX: L(\vy|\vx) \leq 2^{kN_{\vy}}G(\vy) \}$. Then, $\forall \vx \notin \sA$ the likelihood of $M(\vy|\vx)=0$. Hence, the adversary maximizing $M(\vy|\vx)$ is the adversary for $L(\vy|\vx)$ within $\sA$.
\end{proof}

\textbf{Executing Attack. \ } Applying \ouralg to obtain $M$ has implications on the suitable procedures to attack $M$. In particular, it requires solving the constrained optimization problem in \eqref{eq:attack-m-objective}, which adds a layer of complexity to the unconstrained problem for $L$. In general, constrained optimization problems are more challenging; this is compounded by the upper bound on $L(\vy|\vx)$ not decomposing across tokens. Furthermore, while large models, such as LLama-3-8B, are often publicly available, $G$ will likely be a custom model for which the attacker does not have white-box access. For a successful attack, the adversarial user must estimate the likelihood ratio between $L$ and $G$, which might prove challenging. This indicates that attacking $M$ defined through \ouralg might be a harder problem than attacking $L$. Finally, as a reassuring reminder, while it is possible to attack $M$, our certificate holds and $M$ cannot be attacked past the upper bound provided in Theorem~\ref{thm:valid}.

\section{Experimental Setup} \label{app:experimental-setup}

\subsection{\CT Dataset}\label{app:chartask-dataset}
For prototyping, we have created a toy dataset that we call \CT. The goal of the \CT dataset is to have a well-controlled toy dataset with clear definitions of target domain $\sT$ and other domains $\sF$.

\begin{table}[b]
    \centering
    \caption{Examples of the \CT dataset}
    \vspace{-8pt}
    \label{tab:example-chartask}
    \begin{tabular}{l l l l l l}
\toprule
\multirow{2}{*}{Task} & \multirow{2}{*}{Pool} & \multicolumn{4}{c}{Sequence} \\ \cmidrule(lr){3-6}
                      &                       & Prompt & Task & Completed & Combined \\
\midrule
Sorting & Int                     & 5 3 6      & S R A E    & 3 5 6         & Q 5 3 6 S R A E 3 5 6        \\
Adding  & Int                     & 5 3 6      & A E R S    & 6 4 7         & Q 5 3 6 A E R S 6 4 7        \\
Reverse Sorting  & Int                     & 5 3 6      & R E A S    & 6 5 3         & Q 5 3 6 R E A S 6 5 3        \\
Even-Odd  & Int                     & 5 3 6      &  E R A S    & 6 3 5         & Q 5 3 6 E R A S 6 3 5        \\
Sorting & Int + Char                     & 13 5 c a      & S E R A    & 13 5 a c & Q 13 5 c a S E R A        \\
Adding  & Int + Char                     & 13 5 c a      & A S R E    & 14 6 d b         & Q 13 5 c a A S R E        \\
Reverse Sorting  & Int + Char                     & 13 5 c a      & R E A S    & c a 5 13         & Q 13 5 c a R E A S c a 5 13        \\
Even-Odd  & Int + Char                     & 13 5 c a      & E S A R    & a c 13 5 & Q 13 5 c a E S A R 13 5 c a        \\

\bottomrule
\end{tabular}%

\end{table}

As shown in Table~\ref{tab:example-chartask}, each sequence consists of three parts: A sequence of random characters, a task definition in the middle, and another sequence of characters in the end. We refer to the random sequence as $S_{in}$. In the middle there are four task tokens, the first of which defines the task $T$. ``S'' sets the task to sorting, ``R'' to reverse sorting, ``A`` to adding $+1$ and ``E'' to even-odd sorting. The instruction token is followed by the remaining three task tokens in random order to ensure that all are seen by a model trained on a subset of these. Finally, the completed sequence is the original sequence of characters with the task performed on them, i.e. $S_{out}=T(S_{in})$. The pool of characters for each sequence is either only integers or integers and lower case letters. Importantly, all tasks interpret characters and integers as characters alike. For example, sorting integers ``11'', ``5'' results in ``11'', ``5''. To be precise, all tasks are based on integer unicode representations of characters.

Each sequence has a variable length of up to 49 elements in $S_{in}$ (the elements can be double digits). For integers, we use a pool of 49 unique distinct integers and for characters, we use a pool of 249 elements (e.g., defining ``at'' as one element in the sequence). Under these conditions, there exists a combinatorially large set of unique sequences far exceeding our training dataset size.

Given the tasks and pools of characters, 8 possible domains emerge as shown in Table~\ref{tab:example-chartask}, which we denote as \CT(Task, Pool). We define sorting integers as the target domain: $\gD_{\sT}=\text{\CT(Sorting, Int)}$ and all other combinations as out-of-domain. We create two distinct datasets with non-overlapping splits for training, validation, and testing. The in-domain dataset consists of 1M training samples. The ``generalist'' dataset $\gD_{\sT+\sF}=\text{\CT(All, Int + Char)}$ contains all possible tasks with sequences consisting of integers and characters. We use 1M training sequences per task, and hence 4M sequences in total. The validation and test sets are 64 sequences and 4096 sequences, respectively.

\subsection{\CT Setup}\label{app:chartask-setup}
\textbf{Dataset and Domain. \ } We use the \CT dataset described in Appendix~\ref{app:chartask-dataset}. We train a custom BPE tokenizer of length 360 \citep{sennrich_neural_2016}. In practice, the pretrained tokenizer of any foundation model is trained on a general dataset. Hence, we train the tokenizer using $\gD_{\sT}$ and $\gD_{\sF}$, the target and out-of-domain datasets. While the dataset is inherently suitable for a sequence-to-sequence task, we treat it as next-token prediction problem just as used in language modeling.

\textbf{Training. \ } We train our domain model $G$ on a set of integer sorting examples, \CT(Sorting, Int). We train a GPT-2 \citep{radford_language_2019} architecture with 3 layers, 3 heads and 48 embedding dimension. We train the model on partial sequences, as we are embedding marginal sequences $\vy$. Hence, we cut each sequence in two parts using a splitting point that is sampled under a uniform distribution. Hence, the model learns the transition from ``[BOS] ..'' to any character that might be the first response token.

For the generalist model $L$, we train using all available tasks on integers and characters, \CT(All,Int+Char). We train a GPT-2 architecture with 6 layers, 6 heads and 192 embedding dimensions.

We train $L$ and $G$ with AdamW (weight decay 0.1) for 2048 steps with a cosine learning rate schedule with 500 steps warm-up, a maximum learning rate of 0.005, scheduled for 40 epochs. We train with 120 context window using next-token prediction.

\textbf{Inference. \ } We use common parameters to tweak the predictive distribution of our models. For $G$ we use a temperature of 0.7 and for $L$ of 0.2. We find this greatly helps the model performance of both. We do not perform \emph{TopK} selection of tokens. We prompt with a prompt length of 10. The task-completed sequence is almost deterministic given the prompt and task for models that have very high accuracy. Hence, we remove sequences where the prompt of 10 tokens is larger than 25\% of the entire sequence.

\subsection{20NG Setup}\label{app:20ng-setup}

\textbf{Dataset Cleaning. \ } The 20NG dataset is very dirty, containing a wide array of random special character sequences and arbitrary formatting. We found these sequences to complicate model training and large pre-trained models struggled with it. In addition, as formatting strongly varies between the 20NG dataset and others, this is a confounding factor for OOD detection. Classifying sentences as ID or OOD should focus on semantics, but the formatting provides a spurious correlation that is easily exploited by models. Hence, we decided to clean the dataset. To do so, we utilise the \texttt{scikit-learn} (v$1.5.1$) \citep{pedregosa_scikit-learn_2011} options to remove headers, footers and quotes. Further, we cleaned it using Llama-3.1-8B-Instruct \citep{dubey_llama_2024} using the following query:

\centerbox{
Your task is to clean and format a string.

Instructions:

- Do not change the order of the words.

- Remove cryptic character sequences, spacings out of order, and line breaks within sentences.

- Remove out-of-order punctuation, but leave correct punctuation in place. 

- The result should be semantically and lexically the same as the original but well formatted.

- Remove IP addresses and email addresses.

- Remove sequences of (special) characters, that are not human language.

- Only return the cleaned string without messages or quotes around it. Do not return any other information. Do not repeat the instructions. Do not repeat the example.

\bigskip

Sentence:

\smallskip
}

We check the output for various keywords and phrases from prompt and find a 0\% violation rate. While there still exist random sequences, the data quality is greatly improved. We notice that several sequences exist in 20NG and OOD testing datasets that are seemingly random character sequences and multiple trigram repetitions such as ``Nanaimo British Columbia Nanaimo British Columbia Nanaimo British Columbia ...''. These sequences have the highest likelihood under model $G$ and $L$ while not having any semantic meaning nor constituting a valid sequence that could indicate model misappropriation. Hence, when reporting $\max$ likelihoods for 20NG over a finite dataset (e.g. $\max_{\vx,\vy \in \gD_{\sF}} L(\vy|\vx)$) we instead use the 99.99th quantile and report it as $\max$.

\textbf{Training. \ } We use a pre-trained Gemma 2 tokenizer for both models which has a vocabulary size of 256k tokens.
For the fine-tuned model $L$, we use a pre-trained decoder-only Gemma 2 2B (hosted on Hugging Face) as the starting point then fine-tune it to our ID dataset using LoRA adaptors which involved training an additional 10.4M parameters (0.4\% of the total parameters).We train $L$ with AdamW (weight decay 0.01) for 1536 steps with a cosine learning rate schedule with 64 steps warmup, a maximum learning rate of 5e-5, scheduled for 32 epochs. We train with 256 context window using next-token prediction.

For the model $G$, we use a decoder-only GPT-small model architecture,  6 layers, 6 heads and 384 embedding dimensions and a total of 109.3M parameters, which we train from scratch using the ID data exclusively. We train $G$ with AdamW (weight decay 0.01) for 320 steps with a cosine learning rate schedule with 100 steps warm-up, a maximum learning rate of 3e-4, scheduled for 100 epochs. We train with 256 context window using next-token prediction.

\textbf{Inference. \ } For both $L$ and $G$ we use a default temperature of 1. We do not perform \emph{TopK} token selection. When evaluating performance, we use a 128-token long prompt and a 128-token long ground truth response.

\subsection{TinyShakespeare Setup}\label{app:tiny-shakespeare-setup}
\textbf{Dataset Cleaning. \ } The formatting in TinyShakespeare dataset was distinctly different to other texts with long sequences of line breaks and usage of all-caps for character names. We removed these excessive line breaks and changed the character names from all caps to title case to make it similar to other datasets and make OOD detection more challenging.

\textbf{Training. \ } We use a pre-trained Gemma-2 tokenizer for both models which has a vocabulary size of 256k tokens. For the fine-tuned model $L$, we use a pre-trained decoder-only Gemma-2-2B as the starting point then fine-tune it to our ID dataset using LoRA adaptors which involved training an additional 10.4M parameters (0.4\% of the total parameters). We train $L$ with AdamW (weight decay 0.01) for 128 steps with a cosine learning rate schedule with 64 steps warm-up, a maximum learning rate of 5e-5, scheduled for 32 epochs. We train with 256 context window using next-token prediction.

For the model $G$, we use a decoder-only GPT-micro model architecture,  4 layers, 4 heads and 128 embedding dimensions and a total of 33.7M parameters, which we train from scratch using the ID data exclusively. We train $G$ with AdamW (weight decay 0.01) for 2400 steps with a cosine learning rate schedule with 300 steps warm-up, a maximum learning rate of 3e-4, scheduled for 300 epochs. We train with 256 context window using next-token prediction.

\textbf{Inference. \ } For both $L$ and $G$ we use a default temperature of 1. We do not perform \emph{TopK} token selection. When evaluating performance, we use a 128-token long prompt and a 128-token long ground truth response.

\subsection{MedicalQA}\label{app:medqa-setup}

We apply our method to medical question answering as target domain, $\sT$. This could, for example, be extended to a chatbot for clinicians to research patient symptoms. To model potential questions and answers, we use the PubMedQA dataset \citep{jin_pubmedqa_2019} as $\gD_{\sT}$, which contains approximately 200K QA pairs for training and 1000 test pairs. We regard question answering on other topics, such as geography or computer science as $\sF$. To model this, we use the Stanford Question and Answering Dataset (excluding medical categories) \citep{rajpurkar_squad_2016} as $\gD_{\sF}$.

\textbf{Training. \ } As a generalist LLM, $L$, we use a LLama-3-8B model \citep{aimeta_llama_2024} and train a custom GPT-2 model (184M parameters) for $G$ \citep{radford_language_2019}. We pre-train $G$ on PubMedQA \citep{jin_pubmedqa_2019} with 200K sequences. We then use 100K prompts from PubMedQA to generate sequences using $L$ and then fine-tune on them using responses from $L$ to half the prompts in PubMedQA. As $G$ embeds the responses, $G(\vy)$, we fine-tune using ``BOS[Response]'' rather than entire sequences. We pre-train with a learning rate of 0.0001 for 50 epochs and then fine-tune with a learning rate of 0.00001 for another 50 epochs. On 8 $\times$ H100, the total training takes about 2 hours.

\textbf{Inference. } We perform inference without $top_k$ or $top_p$ parameters and with temperatures of $1.0$ for model $L$ and $G$. We prompt using the natural questions as defined by the datasets. For the analysis, we remove responses from SQuAD that are not clearly out-of-domain. For example, the response ``10 million people every year'' is not only a valid response to a geographical question, but can also be an information about the prevalence of the disease. When applying our method, we focus on responses with at least 10 tokens to further remove ambiguous sequences. Modern LLMs tend to be very verbose in their responses, so responses should naturally be longer than 10 tokens.

\subsection{Dataset Categories}\label{app:dataset-categories}

We list here the categories excluded from SQuAD and included in MMLU for reproducibility.

\begin{table}[H]
    \centering
    \begin{tabular}{l|l}
\toprule
\textbf{Excluded From SQuAD} & \textbf{Included in MMLU-Med} \\
\midrule
Antibiotics           & Anatomy                    \\
Symbiosis             & Clinical knowledge         \\
Gene                  & College medicine           \\
Brain                 & College biology            \\
Immunology            & College chemistry          \\
Biodiversity          & High school biology        \\
Digestion             & High school chemistry      \\
Pharmaceutical industry & High school psychology    \\
Mammal                & Human aging                \\
Nutrition             & Human sexuality            \\
Tuberculosis          & Medical genetics           \\
On the Origin of Species & Nutrition               \\
Asthma                & Professional medicine      \\
Pain                  & Virology                   \\
Bacteria              &                            \\
Infection             &                            \\
Black Death           &                            \\
Pharmacy              &                            \\
Immune system         &                            \\
Chloroplast           &                            \\
\bottomrule
\end{tabular}

    \caption{Categories of items in used Datasets.}
    \label{tab:dataset_categories}
\end{table}

\section{Experimental Results}\label{app:results}

\subsection{\CT Results}\label{app:chartask-results}

\begin{figure}[H]
    \centering
    \begin{minipage}{0.32\textwidth}
        \centering
        \includegraphics[width=\textwidth]{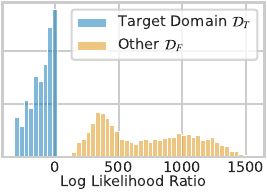}
        \subcaption{}\label{fig:app-ct-log-ratio-histogram}
    \end{minipage}%
    \hfill
    \begin{minipage}{0.32\textwidth}
        \centering
        \includegraphics[width=\textwidth]{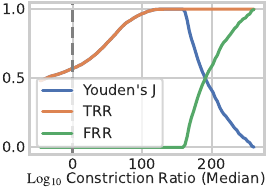}
        \subcaption{}\label{fig:app-ct-ood-vs-certification}
    \end{minipage}%
    \hfill
    \begin{minipage}{0.32\textwidth}
        \centering
        \includegraphics[width=\textwidth]{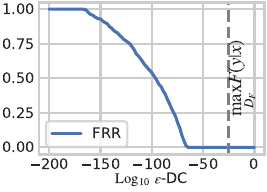}
        \subcaption{}\label{fig:ct-acc-dc}
    \end{minipage}
    \caption{This Figure replicates Figure~\ref{fig:medqa-results-overall} for the \CT dataset.}
    \label{fig:ct-results-overall}
\end{figure}

\subsection{TinyShakespeare Results}\label{app:tiny-shakespeare-results}

\begin{figure}[H]
    \centering
    \begin{minipage}{0.32\textwidth}
        \centering
        \includegraphics[width=\textwidth]{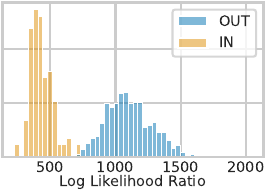}
        \subcaption{}\label{fig:app-ts-log-ratio-histogram}
    \end{minipage}%
    \hfill
    \begin{minipage}{0.32\textwidth}
        \centering
        \includegraphics[width=\textwidth]{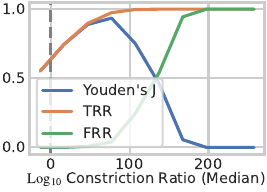}
        \subcaption{}\label{fig:app-ts-ood-vs-certification}
    \end{minipage}%
    \hfill
    \begin{minipage}{0.32\textwidth}
        \centering
        \includegraphics[width=\textwidth]{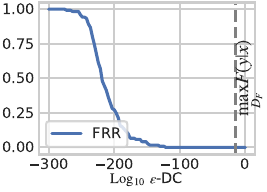}
        \subcaption{}\label{fig:ts-acc-dc}
    \end{minipage}
    \caption{This Figure replicates Figure~\ref{fig:medqa-results-overall} for the TinyShakespeare dataset.}
    \label{fig:ts-results-overall}
\end{figure}

\subsection{20NG}

\begin{figure}[H]
    \vspace{-10pt}
    \centering
    \begin{minipage}{0.32\textwidth}
        \centering
        \includegraphics[width=\textwidth]{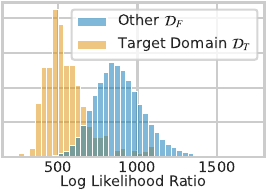}
        \subcaption{}\label{fig:app-20ng-log-ratio-histogram}
    \end{minipage}%
    \hfill
    \begin{minipage}{0.32\textwidth}
        \centering
        \includegraphics[width=\textwidth]{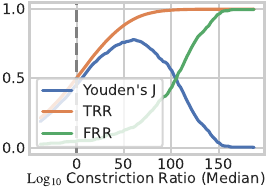}
        \subcaption{}\label{fig:app-20ng-ood-vs-certification}
    \end{minipage}%
    \hfill
    \begin{minipage}{0.32\textwidth}
        \centering
        \includegraphics[width=\textwidth]{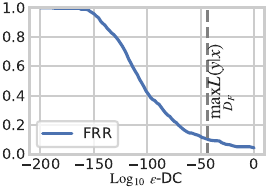}
        \subcaption{}\label{fig:app-20ng-acc-dc}
    \end{minipage}
    \vspace{-5pt}
    \caption{Figure \ref{fig:app-20ng-log-ratio-histogram} shows that log likelihood ratios are well disentangled. Figure \ref{fig:app-20ng-ood-vs-certification} shows the trade-off between OOD and certification: The best OOD detection performance occurs with a constriction ratio of 60. Figure \ref{fig:app-20ng-acc-dc} shows the false rejection rate (FRR) required to certify at a given $\epsilon$.}
    \label{fig:app-20ng-results-overall}
\end{figure}

\subsection{Medical QA}\label{app:medqa-results}

\begin{figure}[H]
    \vspace{-10pt}
    \centering
    \begin{minipage}{0.32\textwidth}
        \centering
        \includegraphics[width=\textwidth]{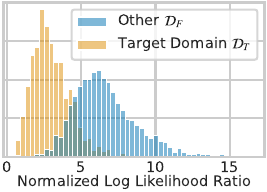}
        \subcaption{}\label{fig:app-medqa-log-ratio-histogram}
    \end{minipage}%
    \hfill
    \begin{minipage}{0.32\textwidth}
        \centering
        \includegraphics[width=\textwidth]{assets/medqa/constriction_ratio_metrics_MedicalQA_generated.pdf}
        \subcaption{}\label{fig:app-medqa-ood-vs-certification}
    \end{minipage}%
    \hfill
    \begin{minipage}{0.32\textwidth}
        \centering
        \includegraphics[width=\textwidth]{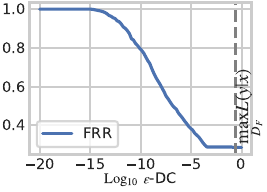}
        \subcaption{}\label{fig:app-medqa-acc-dc}
    \end{minipage}
    \vspace{-5pt}
    \caption{Figure \ref{fig:app-medqa-log-ratio-histogram} shows that log likelihood ratios are well disentangled. Figure \ref{fig:app-medqa-ood-vs-certification} shows the trade-off between OOD and certification. Figure \ref{fig:app-medqa-acc-dc} shows the false rejection rate (FRR) required to certify at a given $\epsilon$. All results are for \ouralg with $T=1$ for Medical QA.}
    \label{fig:app-medqa-results-overall}
\end{figure}

\begin{figure}[H]
    \vspace{-10pt}
    \centering
    \begin{minipage}{0.48\textwidth}
        \centering
        \includegraphics[width=\textwidth]{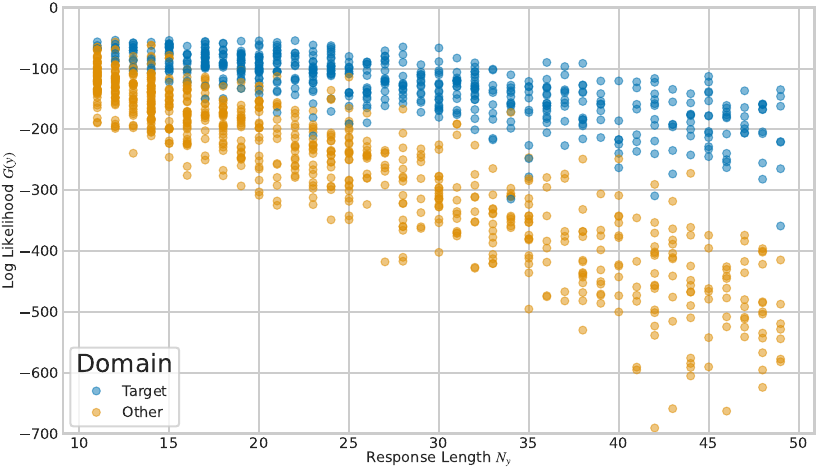}
        \subcaption{}\label{fig:app-medqa-likelihood_generator}
    \end{minipage}%
    \hfill
    \begin{minipage}{0.48\textwidth}
        \centering
        \includegraphics[width=\textwidth]{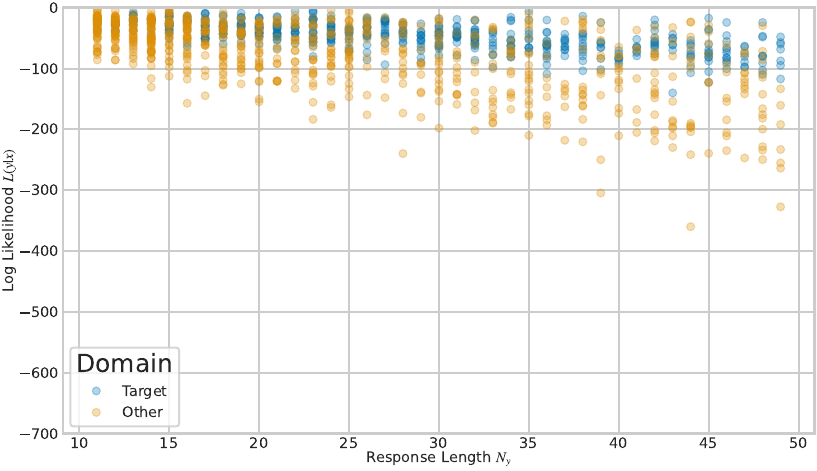}
        \subcaption{}\label{fig:app-medqa-likelihood_llm}
    \end{minipage}
    \vspace{-5pt}
    \caption{This figure demonstrates the gap in log likelihood between in-domain and out-of-domain samples for the guide models $G$ in Figure \ref{fig:app-medqa-likelihood_generator} and the LLM $L$ in Figure \ref{fig:app-medqa-likelihood_llm}. As the length of the response, $N_{\vy}$, increases, the gap between ID ($\gD_{\sT}$) and OOD data ($\gD_{\sF}$) widens. The log-likelihood decreases roughly linearly. Thus, the guide model $G$ on the left side assigns exponential decreasing probabilities to OOD samples.}
    \label{fig:app-medqa-log-likelihoods}
\end{figure}

\subsection{Constriction Ratios for Different False Rejection Rates}\label{app:cr-other-frr}

\begin{figure}[H]
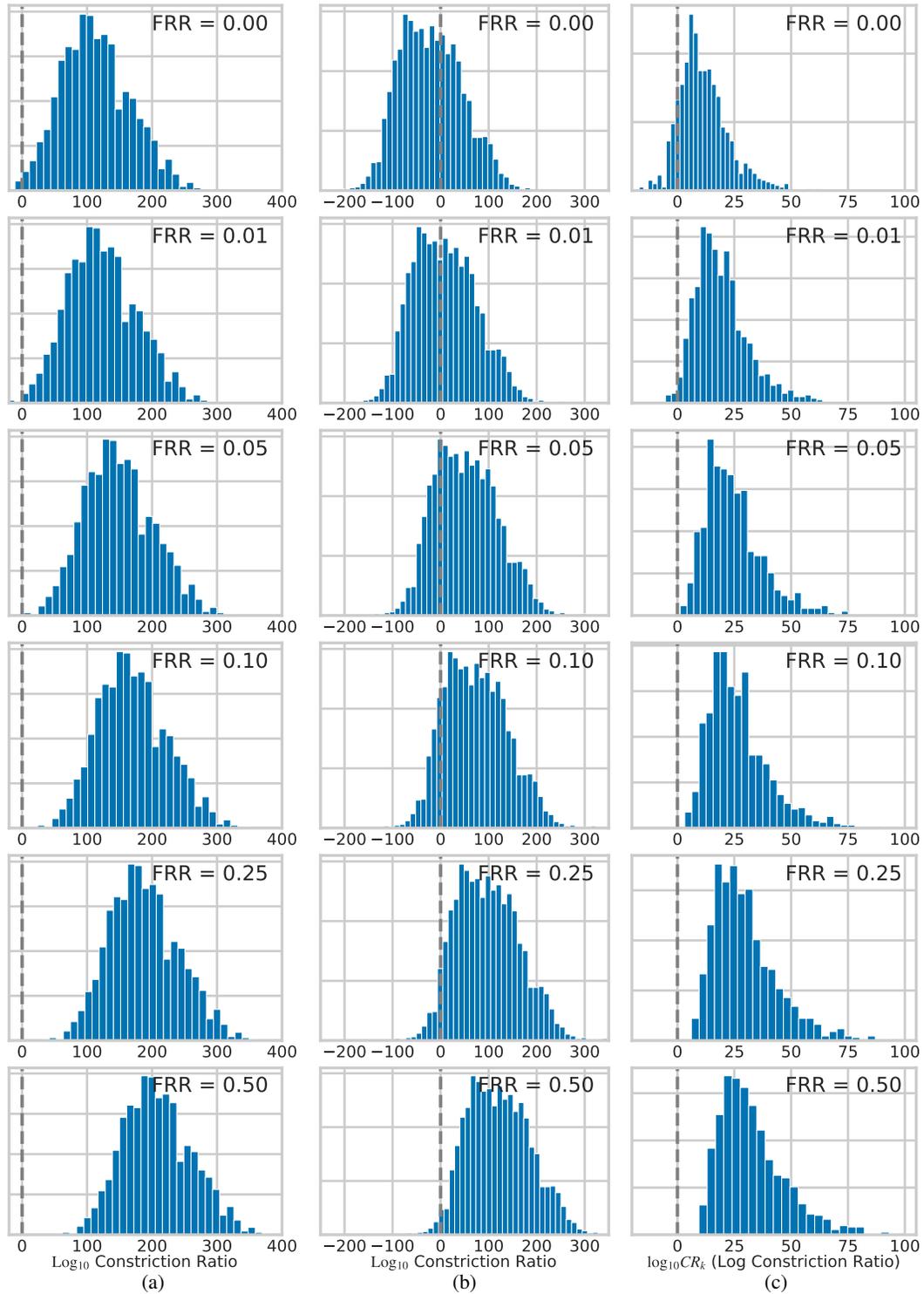

    \include{assets/cr-histograms-appendix}
    \vspace{-5pt}
    \caption{This figure shows the $log_{10}$ constriction ratios (CR) on OOD samples as a function of the false rejection rate (FRR) on the in-domain samples. The rejection threshold $k$ is systematically decreased from top to bottom to achieve a given FRR. We can observe the gradual improvement in constriction while increasing the FRR. (a) shows Tiny Shakespeare, (b) shows 20NG, and (c) Medical QA.}
    \label{fig:app-cr-combined-results}
\end{figure}

\subsection{Atomic Certificates - Length Controlled}\label{app:cr-len-controlled}

\begin{wrapfigure}{r}{0.42\textwidth}
    \vspace{-8pt}
  \begin{center}
    \includegraphics[width=\linewidth]{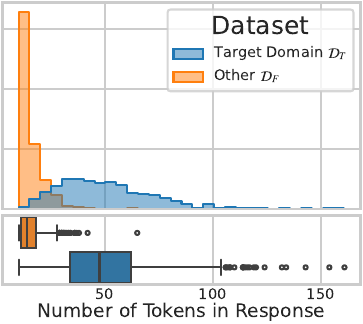}
  \end{center}
  \vspace{-8pt}
  \caption{MedQA setup: The in-domain dataset (PubMedQA) has longer responses than the OOD dataset (SQuAD).}
  \label{fig:app-response-length-hist-medqa}
\end{wrapfigure}
The experimental setup for MedicalQA uses PubMedQA as in-domain dataset and SQuAD as out-of-domain dataset as described in Section~\ref{sec:experimental-setup} and Appendix~\ref{app:experimental-setup}. The different lengths of responses in these datasets confound our findings on the disentanglement of the atomic certificates, $\epsilon_{\vy}$-ACs between in-domain data, $\gD_{\sT}$, and out-of-domain data, $\gD_{\sF}$. In Figure~\ref{fig:app-response-length-hist-medqa}, we show that sequences tend to be a lot shorter in $\gD_{\sF}$ than in $\gD_{\sT}$. As the likelihood of a response decays exponentially in the length of the responses, the responses in the OOD set $\gD_{\sF}$ have relatively high likelihood that is not attributable to the domain restriction, but rather to the length of the response. This results in the eCDFs in Figure~\ref{fig:20ng-ac-ecdf} overlapping significantly. To show that this is a confounding factor that is indeed worsening disentanglement, we resample the data to account for length and present results here.

\textbf{Setup. \ }
We resample the in-domain data, $\gD_{\sT}$, and out-of-domain data, $\gD_{\sF}$ to have matching distribution of response lengths.
We find the target distribution using the following steps: First, we find the common support between the distribution of response length $N_{\vy}$ between $\gD_{\sT}$ and $\gD_{\sF}$, $N_{\vy} \in [15, 38]$. This interval covers 67\% of samples in the target domain dataset and 58\% of the OOD dataset. Second, we obtain the empirical distribution of $N_{\vy}$ in the in-domain dataset, perform Laplace smoothing \citep{manning_introduction_2008} with $\alpha=1$ and then further smooth the distribution using a moving average with a window length of $5$. Third, we perform weighted sampling with replacement from $\gD_{\sT}$ and $\gD_{\sF}$ with a size of 100 times the original. The sampling weights are computed s.t. the distribution of $N_{\vy}$ matches the target distribution. We denote these \textbf{r}e\textbf{s}ampled sets as $\gD_{\sT}^{RS}$ and $\gD_{\sF}^{RS}$.

\begin{wrapfigure}{r}{0.42\textwidth}
    \vspace{-17pt}
  \begin{center}
    \includegraphics[width=\linewidth]{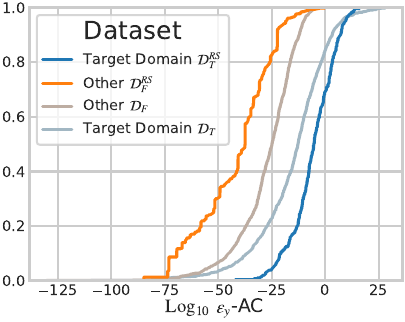}
  \end{center}
  \vspace{-8pt}
    \caption{The eCDFs of $\epsilon_{\vy}$-ACs are shown for the original in- and out-of-domain data for the MedQA setup in comparison to a resampled dataset controlling the response length as confounder. The gap between the permissiveness of in-domain samples and restrictiveness on out-of-domain samples is greatly improved.}
    \label{fig:app-ecdf-length-controlled}
  \vspace{-30pt}
\end{wrapfigure}

\textbf{Results.  } We find that the disentanglement of atomic certificates, $\epsilon_{\vy}$-ACs, improves greatly after eliminating the confounding factor response lengths. Figure~\ref{fig:app-ecdf-length-controlled} shows the empirical cumulative distribution functions (eCDFs) for ``original'' datasets, $\gD_{\sT}$ and $\gD_{\sF}$ in gray tones, as well as the results for $\gD_{\sT}^{RS}$ and $\gD_{\sF}^{RS}$. You may observe that the distribution of ACs shifted left for datasets representing $\sF$ and shifted right for datasets representing $\sT$, effectively increasing the disentanglement. This indicates that, when comparing \emph{similar} in-domain and out-of-domain samples, the gap in restriction is larger than presented in Figure~\ref{fig:medqa-ac-ecdf}. ACs on in-domain samples are more \emph{permissiveness} and ACs on out-of-domain samples even more \emph{constrictive} than it initially appeared.

\subsection{Atomic Certificate by Likelihood}\label{app:cr-by-likelihood}

Obtaining a tight atomic certificate for a sample $\vy$ is most important when the sample is likely proposed by $L$. Hence, in this section we study the log constriction ratio, the tightening of our adversarial certificate over model $L$, as a function of the sample's likelihood under $L$.

We bin out-of-domain samples into 10 bins based on their log likelihood under model $L$, i.e. $\log L(\vy|\vx)$, and compute median, 25th and 75th percentile log constriction ratio, as well as the median log likelihood. We present results in Figure~\ref{fig:cr-by-likelihood} for both 20NG and TinyShakespeare. We observe that the constriction strengthens when samples get more likely under $L$. That means, those samples most likely to be sampled under $L$ benefit most from our atomic certificate. We consider this to be a favorable result.

\begin{figure}[H]
  \centering
  \begin{subfigure}[b]{0.42\textwidth}
    \includegraphics[width=\textwidth]{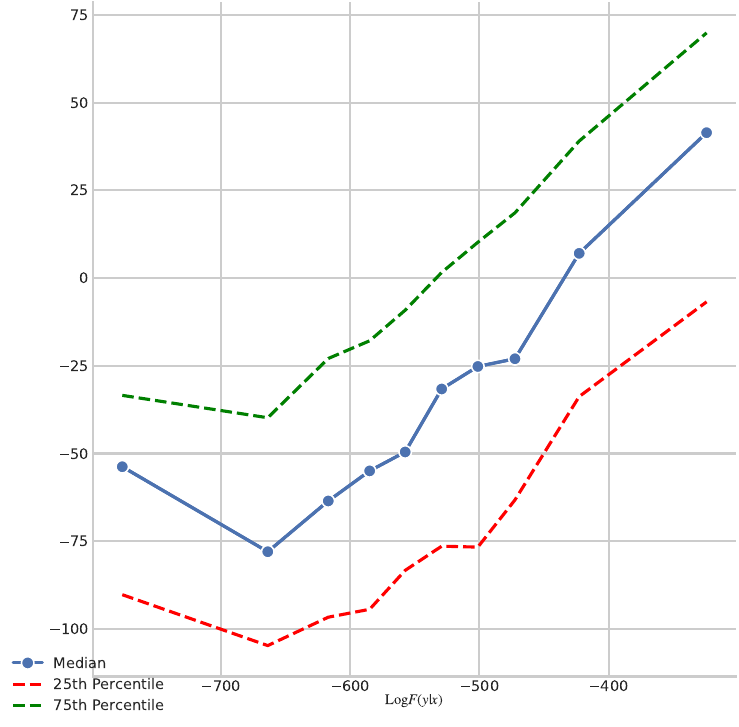}
    \tightcaption
    \caption{}
    \label{fig:cr-by-likelihood-20ng}
  \end{subfigure}
  \hfill
  \begin{subfigure}[b]{0.42\textwidth}
    \includegraphics[width=\textwidth]{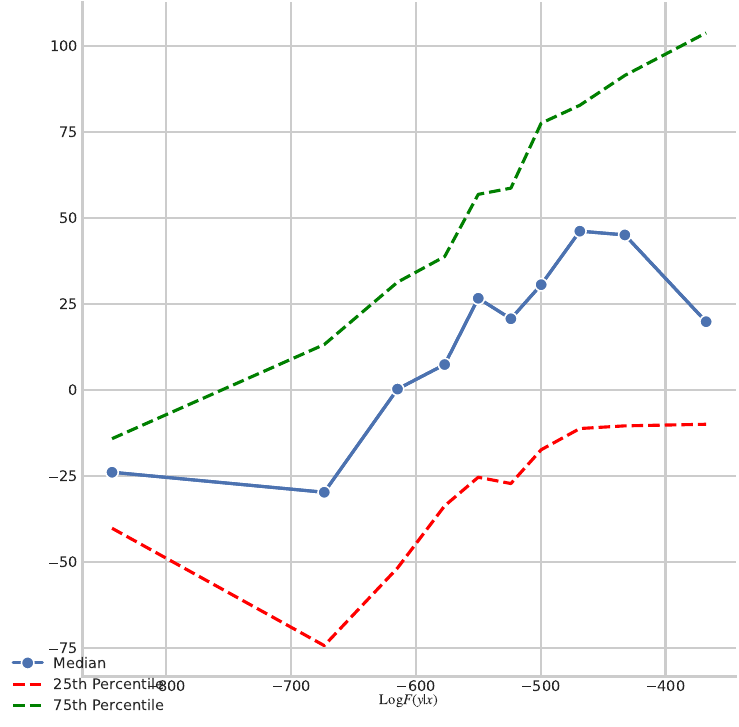}
    \tightcaption
    \caption{}
    \label{fig:cr-by-likelihood-ts}
  \end{subfigure}
  \vspace{-5pt}
  \caption{These figures show the constriction ratio as a function of log likelihood of samples under $L$ for out-of-distribution samples. Figure \ref{fig:cr-by-likelihood-20ng} displays the results for 20NG and Figure \ref{fig:cr-by-likelihood-ts} for TinyShakespeare. We bin all samples into 10 bins. For each bin the $x$-axis shows the median log likelihood of the sample under $L$, $\log L(\vy|\vx)$. The $y$-axis shows the $\log_{10}$ constriction ratio (median and percentiles for each bin). }
  \label{fig:cr-by-likelihood}
  \vspace{-5pt}
\end{figure}

\section{Repeated Sampling ($T>1$)}\label{app:t-vs-k}

In Section~\ref{sec:results-generation} we study the performance of \ouralg by sampling from $L$ with a single step, that is, $T=1$. Here, we extend the analysis to $T>1$.

\textbf{Setup. \ }
We adopt the MedicalQA setup as described in Appendix~\ref{app:medqa-setup}. However, instead of employing \ouralg with $T=1$, we use $T \in \{1,2,3,4,5\}$ and study the resulting $\epsilon-DC$ for combinations of $k$ (the rejection threshold of \ouralg). As above, for ease of presentation we use a fixed temperature of $1.0$ for $L$.

\textbf{Results. \ }
We find that increasing $T$ significantly reduces false rejection rates (FRR) while only marginally increasing the $\epsilon$-DC (domain certificate). We present findings for the FRR in Figure~\ref{fig:app-medqa-T-vs-k-FRR} and for $\epsilon$-DC in Figure~\ref{fig:app-medqa-T-vs-k-eps}. The minor increase in $\epsilon$ due to increasing $T$ should not come as a surprise as we recall the formula for the upper bound: $2^{kN_{\vy}}TG(y)$ (see \eqref{eq:upper_bound}). Even $T=10$ increases the upper bound $\epsilon_y$ by only one order of magnitude. On the other hand, the gains in in-domain performance are marked. In Figure~\ref{fig:app-medqa-T-vs-k-FRR}, we can observe the FRR is roughly halved for $T=5$ and $k>2$, greatly improving the refusal behavior of the model on in-domain samples. Finally, we note that the temperature of $L$, $t_L$, is a confounding factor. For $t_L \rightarrow 0$, we would perform (nearly) deterministic sampling of $\vy|\vx$ and therefore $T>1$ would not have any benefit.

\begin{figure}[h]
    \centering
    \begin{subfigure}{0.48\textwidth}
        \centering
        \includegraphics[width=\textwidth]{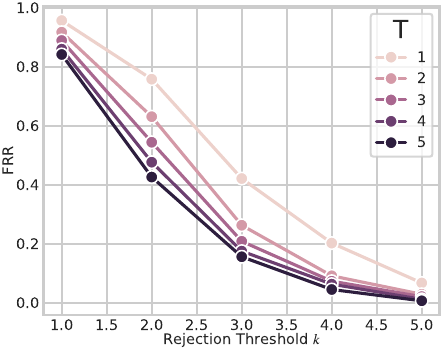}
        \tightcaption
        \caption{}
        \label{fig:app-medqa-T-vs-k-FRR}
    \end{subfigure}
    \hfill
    \begin{subfigure}{0.48\textwidth}
        \centering
        \includegraphics[width=\textwidth]{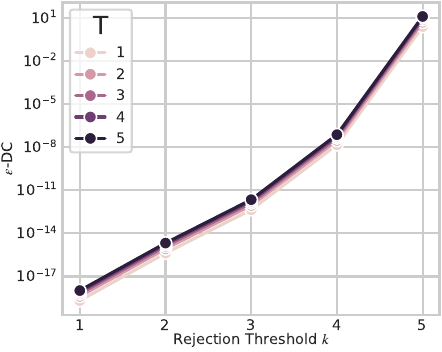}
        \tightcaption
        \caption{}
        \label{fig:app-medqa-T-vs-k-eps}
    \end{subfigure}
    \vspace{-5pt}
    \caption{False Rejection Rate (FRR) (a) and  $\epsilon-DC$ of the Domain certificate of VALID (b) plotted for a range of  different values of $T$ and $k$.}
    \label{fig:app-medqa-T-vs-k}
\end{figure}

\section{Ablation}\label{app:ablation}

\subsection{Comparing $M$ to $G$}\label{app:chartask-ablation}

Our goal is to provide a guarantee on a generalist model assuming that such a model outperforms custom, small solutions that are inherently safer due to their domain specific training. We test this empirically by examining the  gap in performance between the generalist model $L$, a small in-domain model. As $G$ is trained marginally on $\vy$, it is not able to perform any task. Hence, we exactly replicate the training procedure of $G$ and train a model on the entire sequence, $G'(\vx,\vy)$. We utilize the \CT dataset as described above and study the accuracy of each model in generating valid sequences: A valid sequence is one that starts with \texttt{Q}, is followed by a random sequence of characters (e.g. \texttt{5 3}), followed by four unique task tokens (e.g. \texttt{S A E R}) defining a task, which is then performed (e.g. \texttt{3 5}). The sequence is expected to terminate there. If \emph{any} of these are violated, the generated sequence is scored as invalid. We perform inference on 1000 prompts from the target domain test dataset prompting the model with various lengths of prompts. In Table~\ref{tab:accuracy-chartask-ablation}, we present the results: The accuracy of generating such sequences of $L$ lies significantly above that of $G$ (difference of approx 30\%). This shows that $G$ is effective in restricting the domain while performing considerably worse than $L$. Hence, our method combines the best of both models: The safety of $G$ with the performance of $L$.

\begin{table}[h!]
    \centering
    \begin{tabular}{l c c}
        \toprule
        {\textbf{Prompt Length}}
        & \textbf{G} & \textbf{L} \\
        \midrule
        1 & 60.45 & 91.21 \\
        5 & 60.25 & 92.68 \\
        10 & 66.89 & 91.11 \\
        \bottomrule
    \end{tabular}
    \caption{Accuracy scores for \CT generation dataset.}
    \label{tab:accuracy-chartask-ablation}
\end{table}

\subsection{Benefit of larger Guide Models}\label{app:ablation-size-G}

In this Appendix, we study the influence of the size of $G$ on the VALID results. In particular, we ask whether VALID benefits from smaller or larger models.

\textbf{Setup. }
We turn to our MedicalQA setup as described in Section~\ref{sec:experimental-setup} and Appendix~\ref{app:medqa-setup}. With the same methodology, we fit two more models for $G$. $G_{XS}$ follows a GPT-2 architecture with 6 layers, 6 heads and 192 embedding dimension resulting in 27.49M parameters. $G_{S}$ follows a GPT-2 architecture with 6 layers, 6 heads and 384 embedding dimensions resulting in 60.29M parameters. To recap, the $G$ model as used above uses 12 layers, 12 heads and 768 embedding dimension resulting in 184M parameters. We then compare the three models on samples generated by $L$ following Section~\ref{sec:results-generation}.

\begin{figure}[b]
    \centering
    \begin{subfigure}{0.48\textwidth}
        \centering
        \includegraphics[width=\textwidth]{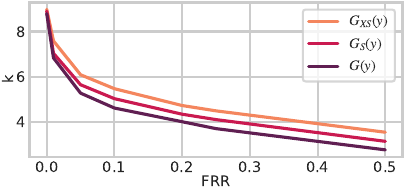}
        \caption{}
        \label{fig:app-medqa-g-size-k-frr}
    \end{subfigure}
    \hfill
    \begin{subfigure}{0.48\textwidth}
        \centering
        \includegraphics[width=\textwidth]{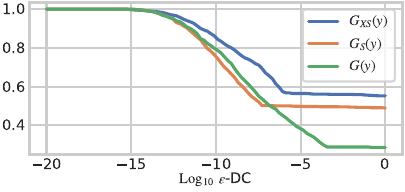}
        \caption{}
        \label{fig:app-medqa-g-size-e-dc}
    \end{subfigure}
    \caption{These Figures demonstrate differences in the behavior of VALID for different sizes of guide models $G$. Figure~\ref{fig:app-medqa-g-size-k-frr} shows that larger models allow for lower $k$ and hence lower bounds at the same False Rejection Rate (FRR). Figure~\ref{fig:app-medqa-g-size-e-dc} shows the FRR ($y$-axis) for a given $\epsilon$-DC for guide models of different sizes.}
    \label{fig:main}
\end{figure}

\textbf{Results. } We find that larger models tend to perform better, however, the evidence is not strong. First, we study the rejection threshold $k$ per model. As described in \eqref{eq:upper_bound} in Theorem~\ref{thm:valid}, VALID's upper bounds gets tighter with smaller $k$. Hence, in Figure ~\ref{fig:app-medqa-g-size-k-frr} we plot $k$ values achieving a given false rejection rate (FRR) for each model. We observe that larger models enable smaller $k$ at the same FRR. This indicates that the trade-off in $k$ between certification and OOD detection is more favorable under larger models. This should not come as a surprise, as larger models tend to achieve better perplexity (i.e. lower loss) on in-domain data.

Next, we study the constriction ratios of the Atomic Certificates (AC) and present results for different sizes of $G$ as shown in Table~\ref{tab:app-medqa-size-g-CR}. For each model, we provide the the 10th percentile, median and 90th percentile. You may observe that $G_{XS}(y)$ consistently provides constriction ratios that are are often around 10 orders of magnitudes worse than $G_S(y)$ and $G(y)$. Interestingly, $G_S(y)$ yields better ratios than $G(y)$. However, the difference is smaller. We speculate that the limited amount of ID training data means we do not see benefits for increasing the size of $G$ beyond a point, as it begins to overfit without increasing regularization.

Finally, we study the Domain Certificates (DC) for each model. For this we replicate Figure~\ref{fig:app-medqa-acc-dc} and present Figure~\ref{fig:app-medqa-g-size-e-dc} showing the false rejection rate (FRR) given an $\epsilon$-DC for the three models. We may observe that the lower bound to the FRR significantly increases as the models become smaller. The evidence here suggests that larger guide models yield better domain certificates.

In conclusion, the evidence points to larger models working better for an application like MedQA. The evidence uniformly shows that a model as small as $G_{XS}(y)$ does perform significantly worse than larger models.

\begin{table}[h]
    \centering
    \begin{tabular}{c c c c}
\toprule
\multirow{2}{*}{\textbf{FRR }} & \multicolumn{3}{c}{\textbf{$\text{Log}_{10}$ Constriction Ratio (10\% / Median / 90\%)}} \\
\cmidrule(lr){2-4}
 & \textbf{$G_{XS}(y)$} & \textbf{$G_S(y)$} & \textbf{$G(y)$} \\
\midrule
\textbf{0\%}  & -427 / -45 / 12   & -408 / -41 / 12   & -449 / -54 / 6   \\
\textbf{1\%}  & -246 / -14 / 42   & -176 / -3 / 79    & -198 / -10 / 43  \\
\textbf{5\%}  & -74 / 12 / 141    & -42 / 21 / 195    & -42 / 18 / 162   \\
\textbf{10\%} & -29 / 24 / 202    & -11 / 35 / 257    & -8 / 33 / 229    \\
\textbf{20\%} & -3 / 43 / 281     & 1 / 57 / 337      & 3 / 50 / 302     \\
\textbf{25\%} & 0 / 50 / 308      & 5 / 63 / 364      & 7 / 60 / 345     \\
\textbf{50\%} & 11 / 81 / 430     & 13 / 96 / 497     & 15 / 89 / 477    \\
\bottomrule
\end{tabular}

    \caption{Constriction Ratios for MedicalQA for three models of different sizes. The smallest model yields significantly worse (lower) constriction ratios.}
    \label{tab:app-medqa-size-g-CR}
\end{table}

\section{Benchmarking}\label{app:benchmark}

In this section, we provide a comprehensive description of the PubMedQA experimental setup presented in Section~\ref{sec:results-benchmarks}, present additional benchmarking results, and extend our evaluation framework to the MMLU benchmark \citep{hendrycks_measuring_2021}.

\subsection{PubMedQA}
\textbf{Setup. \ } The PubMedQA benchmark \citep{jin_pubmedqa_2019} comprises 1000 items. Each item contains background information (context), a multiple-choice question (answerable by yes/no/maybe), a long-text answer, and a ground truth label (yes/no/maybe). As illustrated in Figure~\ref{fig:pubmedqa-bench-at-eps-schematic}, we evaluate the model through two streams: ``item correctness'' and ``response acceptance''. In both streams, we prompt the model with the context and question.  In the ``item correctness'' stream, the model is provided with all multiple-choice tokens, and the maximum likelihood answer is selected and evaluated for correctness. In the ``response acceptance'' stream, we present the long-text answer as a response and determine if model $M$ abstains at a given domain certificate of $\epsilon$. An item is considered correct at $\epsilon$ if and only if the response is accepted and the model scores correctly. We use the reasoning-required variant of the PubMedQA benchmark (for further details, see \cite{jin_pubmedqa_2019}).

\textbf{Results.  } Extending our analysis of the PubMedQA benchmark presented in Section~\ref{sec:results-benchmarks}, we examined the relationship between PubMedQA performance scores and median constriction ratios. As illustrated in Figure~\ref{fig:app-medqa-pubmedqa-bench-cr}, our findings demonstrate that the model can achieve a $\log_{10}$ constriction ratio of 20 on samples in $\gD_{\sF}$ while maintaining robust PubMedQA performance. Specifically, at a performance threshold of 70\% accuracy, we observed a $\log_{10}CR_k$ value of 21.6, which effectively constrains out-of-domain samples to probabilities at least $1 \times 10^{-21}$ times lower than the likelihood of samples in distribution $L$. This indicates a strong capacity for domain constriction while preserving task performance.

\begin{figure}[t]
    \vspace{-10pt}
    \centering
    \begin{minipage}{0.32\textwidth}
        \centering
        \includegraphics[width=\textwidth]{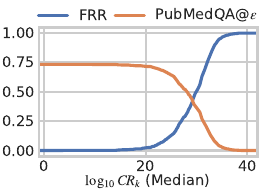}
        \tightcaption
        \subcaption{}\label{fig:app-medqa-pubmedqa-bench-cr}
    \end{minipage}%
    \hfill
    \begin{minipage}{0.32\textwidth}
        \centering
        \includegraphics[width=\textwidth]{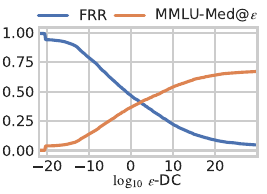}
        \tightcaption
        \subcaption{}\label{fig:app-medqa-mmlu-eps}
    \end{minipage}%
    \hfill
    \begin{minipage}{0.32\textwidth}
        \centering
        \includegraphics[width=\textwidth]{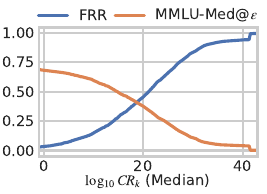}
        \tightcaption
        \subcaption{}\label{fig:app-medqa-mmlu-cr}
    \end{minipage}
    \vspace{-5pt}
    \caption{ Evaluation of domain-certified models through standardized benchmarking. Figure~\ref{fig:app-medqa-pubmedqa-bench-cr} illustrates the association between $\log$ constriction ratios and the PubMedQA@$\epsilon$ benchmark scores across models with varying $\epsilon$-DC certifications. Figure~\ref{fig:app-medqa-mmlu-eps} presents the MMLU@$\epsilon$ metric evaluated at different certification thresholds $\epsilon$. (c) Figure~\ref{fig:app-medqa-mmlu-cr} shows the relationship between $\log$ constriction ratios and corresponding MMLU@$\epsilon$ scores across multiple certification levels.}
\end{figure}

\subsection{MMLU-Med}

\begin{figure}[b]
    \centering
    \includegraphics[width=\linewidth]{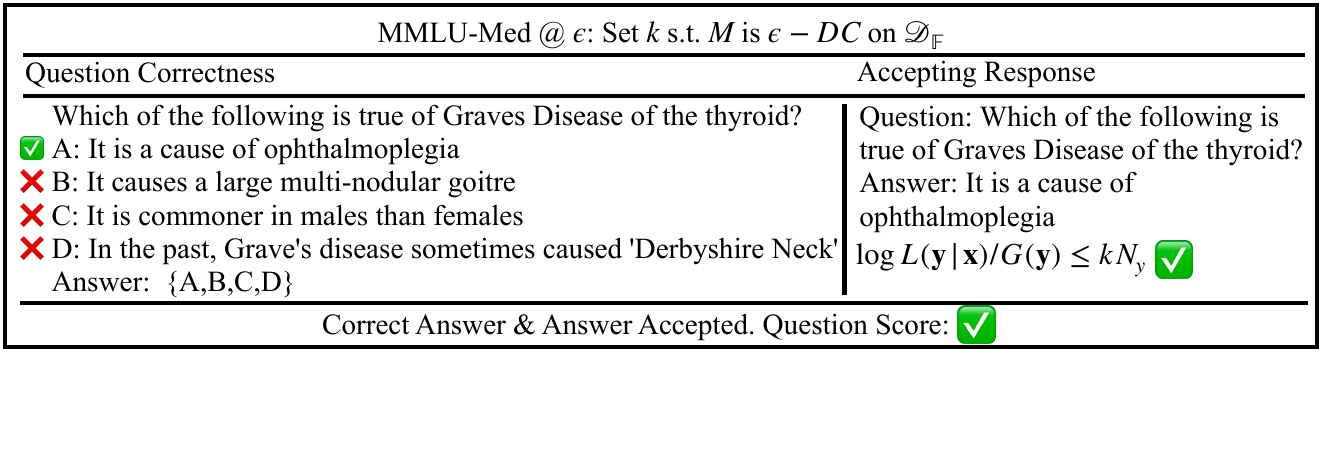}
    \vspace{-48pt}
    \caption{The MMLU@$\epsilon$ benchmark assesses MMLU performance while satisfying $\epsilon$-DC certificate. The correctness is scored as commonly done for MMLU (left). The correct question answer pair is checked for acceptance / rejection by $M$. Only if a sample is accepted and correct, the question is scored positively. For questions not ending in ``?'', the sentence is concatenated without keywords.}
    \label{fig:app-mmlu-at-eps-schematic}
\end{figure}

In this section, we extend the benchmarking of our certified model $M$ for medical question answering to the MMLU benchmark \citep{hendrycks_measuring_2021}. To that end, we follow the same methodology as above for the PubMedQA benchmark. In an earlier version of this work, we reported MMLU results that were erroneous, which we correct here.

\textbf{Setup. \ } MMLU comprises thousands of questions spanning various domains of general and professional factual knowledge. As our model $M$ is deployed for medical questions, we focus on a subset of MMLU categories that fall within our domain $\sT$. We specify the selected categories in Table~\ref{tab:dataset_categories} and designate this remaining benchmark as MMLU-Med.

MMLU's standard format provides $n$-shot examples with four possible answers (A through D) followed by a question in the same format. The model is then prompted to select the correct response. However, this setup does not reflect a realistic user-system interaction. Therefore, similar to PubMedQA, we introduce the MMLU-Med@$\epsilon$ metric, which separates the evaluation into two streams: (1) standard assessment of model $L$ on MMLU-Med to determine correctness, and (2) testing whether the correct question-answer pair is rejected by our algorithm. The process is summarized in Figure~\ref{fig:app-mmlu-at-eps-schematic}. We score an item as correct when the model scores correctly while maintaining its $\epsilon-DC$ on the realistic question-answer pair.

\textbf{Results. \ } Our evaluation yields mixed evidence regarding the model's performance on MMLU-Med. Following the same analysis as for PubMedQA in Section~\ref{sec:results-benchmarks}, we present the MMLU-Med@$\epsilon$ metric in Figure~\ref{fig:app-medqa-mmlu-eps}. As shown, MMLU-Med@$1=37.1$\%, that is, the model retains $37.1$\% accuracy when certified at $\epsilon=1$, or $\log_{10}\epsilon=0$. The $10^{-10}$-DC model achieves a score of 14.1\%. In addition, to the domain certificates, we study the median constriction of our model in relation to its certified performance. The evidence provided in Figure~\ref{fig:app-medqa-mmlu-cr} indicates that a median constriction ratio of $1 \times 10^{-5}$ is achieved on out-of-domain samples together with a score of 65\% on the MMLU-Med@$\epsilon$ benchmark. Further, a median constriction of $1 \times 10^{-20}$ is achieved with an MMLU-Med@$\epsilon$ score of 37\%.

These results are considerably weaker than the strong results presented above for PubMedQA raising the question as to why this is. In Figure~\ref{fig:app-medqa-mmlu-likelihood}, we investigate the domain shift between PubMedQA and MMLU. In particular, Figure~\ref{fig:app-medqa-mmlu-likelihood-G} shows the distribution of log likelihoods of in-domain samples ($\gD_{\sT}$), out-of-domain samples ($\gD_{\sF}$) and MMLU samples under guide model $G$ and Figure~\ref{fig:app-medqa-mmlu-likelihood-ratio} shows the log likelihood ratios for the same samples. We observe a considerable overlap between the distributions of MMLU-Med and PubMedQA samples. However, the distribution of MMLU-Med has a long-tail into the distribution of $\gD_{\sF}$. This explains our results quite well. On the one hand, the large overlapping mass of MMLU-Med and PubMedQA explains why $M$ accepts a wide range of MMLU-Med responses while significantly constricting the model on $\gD_{\sF}$. On the other hand, the long tail of the distribution of MMLU scores into the distribution of $\gD_{\sF}$ indicates a range of MMLU-Med questions will be rejected unless the certificates become vacuous, making it hard challenging high MMLU-Med@$\epsilon$ performance.
We believe that training $G$ on MMLU-style QA pairs would significantly improve results but leave this as a future direction.

\begin{figure}[H]
    \begin{minipage}{0.49\textwidth}
        \centering
        \includegraphics[width=\textwidth]{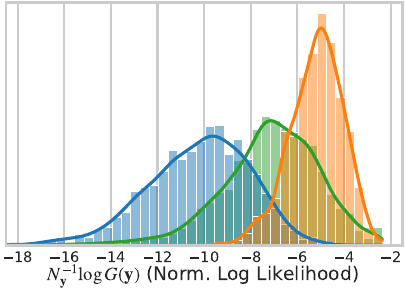}
        \tightcaption
        \subcaption{}\label{fig:app-medqa-mmlu-likelihood-G}
    \end{minipage}%
    \hfill
    \begin{minipage}{0.49\textwidth}
        \centering
        \includegraphics[width=\textwidth]{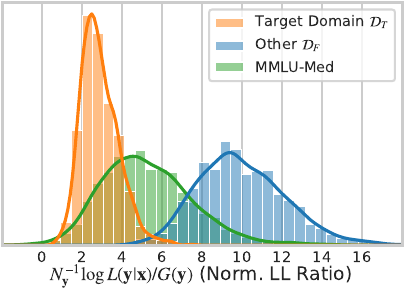}
        \tightcaption
        \subcaption{}\label{fig:app-medqa-mmlu-likelihood-ratio}
    \end{minipage}
    \vspace{-5pt}
    \caption{Comparison of likelihood of three datasets under model $G$, showing a MMLU-Med exhibits domain shift relative to PubMedQA. Figure~\ref{fig:app-medqa-mmlu-likelihood-G} indicates likelihood of MMLU-Med lies in-between the in-domain data $\gD_{\sT}$ (PubMedQA) and out-of-domain data $\gD_{\sF}$. Figure~\ref{fig:app-medqa-mmlu-likelihood-ratio} shows the normalized log likelihood ratio used in \ouralg lead to frequent rejections due to domain shift.}
    \label{fig:app-medqa-mmlu-likelihood}
\end{figure}

\end{document}